\newtheorem{definition}{Definition}
\newtheorem{theorem}{Theorem}
\newtheorem{corollary}{Corollary}
\newtheorem{lemma}{Lemma}
\newtheorem{note}{Note}%
\newcommand{\qed}{\mbox{\ \ \ }\rule{6pt}{7pt}}%
\newenvironment{proof}{\par{\bf Proof:}}{\qed \par}
\renewcommand{\P}{\mathbb P} 
\renewcommand{\Pr}{\mathbb P} 
\newcommand{\R}{\mathit R} 
\newcommand{\reals}{\mathbb{R}} 
\newcommand{\E}{\mathbb E}
\newcommand{\cY}{{ Y}}
\newcommand{\err}{\mathrm{err}}
\newcommand{\anglesep}{\theta}
\newcommand{\sgn}{{\mathrm{ sign}}}
\newcommand{\gammat}{\tilde{\gamma}}
\newcommand{\tgamma}{\gammat}
\newcommand{\distf}{\mathrm{d}}
\renewcommand{\P}{\mathbb P} 
\newcommand{\ta}{\tilde{a}}
\newcommand{\tb}{\tilde{b}}
\newcommand{\tw}{\tilde{w}}
\newcommand{\tc}{\tilde{c}}
\newcommand{\tA}{\tilde{A}}
\newcommand{\tW}{\tilde{W}}
\newcommand{\tU}{\tilde{U}}
\newcommand{\tC}{\tilde{C}}
\newcommand{\tk}{\tilde{k}}
\newcommand{\tr}{\tilde{r}}
\newcommand{\ttk}{\hat{k}}
\newcommand{\tu}{\tilde{u}}
\newcommand{\ttr}{\hat{r}}
\newcommand{\sign}{{\mathrm{ sign}}}
\newcommand{\tO}[1]{\tilde{O}{#1}}
\newcommand{\talpha}{\alpha}
\def\Pr{{\sf P}}\def\E{{\sf E}}
\def\eps{\varepsilon}
\def\R{\Re}
\def\span{{\sf span}}
\newcommand{\vars}{\mathrm{vars}}
\newcommand{\TS}{\mathrm{TS}}
\newcommand{\TTS}{\tilde{\mathrm{TS}}}
\newcommand{\labell}{\mathrm{r}}
\newcommand{\thh}{\tilde{m}}
\newcommand{\tM}{\tilde{M}}
\newcommand{\tTS}{\mathrm{conj}(\tilde{M})}
\newcommand{\wepsacc}{\epsilon_{acc}}
\newcommand{\uepsacc}{\tilde{\epsilon}_{acc}}
\newcommand{\smdim}{\tau}
\title{Efficient Representations for Life-Long Learning and Autoencoding}
\author{{Maria-Florina Balcan}\\{CMU}\\{\tt ninamf@cs.cmu.edu} \and {Avrim Blum}\\{CMU}\\{\tt avrim@cs.cmu.edu} \and {Santosh Vempala}\\{Georgia Tech}\\{\tt vempala@cc.gatech.edu}}
\begin{document}

\addtocounter{page}{-1}
\maketitle
\thispagestyle{empty}
\begin{abstract}
It has been a long-standing goal in machine learning, as well as in AI more generally, to develop life-long learning systems that learn many different tasks over time, and reuse insights from tasks learned, ``learning to learn'' as they do so.  In this work we pose and provide efficient algorithms for several natural theoretical formulations of this goal.  Specifically, we consider the problem of learning many different target functions over time, that share certain commonalities that are initially unknown to the learning algorithm.  Our aim is to learn new internal representations as the algorithm learns new target functions,  that capture this commonality and allow subsequent learning tasks to be solved more efficiently and from less data. We develop efficient algorithms for two very different kinds of commonalities that target functions might share: one based on learning common low-dimensional and unions of low-dimensional subspaces and one based on learning  nonlinear Boolean combinations of features.  Our algorithms for learning Boolean feature combinations additionally have a dual interpretation, and can be viewed as giving an efficient procedure for constructing near-optimal sparse Boolean autoencoders under a natural ``anchor-set'' assumption.
\end{abstract}
\newpage

\section{Introduction}

Machine learning has developed a deep mathematical understanding as well as powerful practical methods for the problem of learning a single target function from large amounts of labeled data. Yet if we wish to produce machine learning systems that persist in the world, we need methods for continually learning many tasks over time and that,  like humans \cite{Gop01}, improve their ability to learn as they do so, needing less data (per task) as they learn more.  A natural approach for tackling this goal (called ``life-long learning''~\cite{Thrun96b,ThrunM95}  or ``transfer learning''~\cite{aep08,pm:13} or ``learning to learn''~\cite{Baxter97,TP97}) is to use information from previously-learned tasks to {\em  improve the underlying representation} used by the learning algorithm, under the hope or belief that some kinds of commonalities across tasks exist.  These commonalities could be a single low-dimensional or sparse representation, a collection of multiple low-dimensional or sparse representations, or some combination or hierarchy, such as in Deep Learning~\cite{bengio13,bengio11}.  In this paper, we develop algorithms with provable efficiency and sample size guarantees for several interesting categories of commonalities, considering both linear and Boolean transfer functions, under natural distributions on the data points.

Specifically, we consider a setting where we are trying to solve a large number of binary classification problems that arrive one at a time. Each classification problem will individually be learnable from a polynomial-size sample,\footnote{Except in Section \ref{se:poly} where we consider learning multivariate polynomials and allow membership queries.} but our goal will be to learn new internal representations that will allow us to learn new target functions faster and from less data.   We will furthermore aim to do this in a streaming setting in which we cannot keep the labeled data for problem $t$ in memory when we move on to problem $t+1$, only the learned hypotheses (which will be required to have a compact description) and the current internal representation. 

We start by considering a conceptually simple case that each classification problem is a linear separator, and that what their associated target vectors share in common is  they  lie
 in a low $k$ dimensional subspace of the ambient space $\R^n$ (equivalently, there exist a small number of hidden linear metafeatures and each target is a linear separator over these metafeatures).
This case has been considered in the ``batch'' setting in which one has data available for all target functions at the same time and therefore can solve a joint optimization problem~\cite{aep08,pm:13}.  However,
for the online setting, a key challenge is that we won't have perfectly learned the previous target functions when we set out to learn our next one.\footnote{In particular, because of this we will need to be particularly careful with which targets we use in constructing our metafeatures, as well as in controlling the propagation of errors.  See, e.g.,  Lemma \ref{lem:subspace} and Figure \ref{figure}.}  For this problem we provide a sample-efficient polynomial time algorithm that, when the underlying data distributions for our $m$ learning problems are log-concave, has labeled sample complexity much better than the $\Omega(nm/\epsilon)$ sample complexity  required for learning the tasks separately.

We then consider scenarios where the commonalities require a representation with more than one level of metafeatures and provide efficient algorithms for these settings as well.
For linear metafeatures, we provide a natural framework where two levels of metafeatures can be efficiently extracted and provide substantial benefit:  specifically, we analyze a scenario where the target functions all
 lie in a $k$ dimensional space and  furthermore within that  $k$-dimensional space,  each target
 lies in one of $r$ different constant dimensional spaces, where $r$ could be large. This models situations where there are really $r$ different {\em types}
 of learning problems but  they do share some commonalities across types (given by a low $k$-dimensional subspace). 

In Section~\ref{se:monomial} we develop algorithms for a scenario where the metafeatures are non-linear, in particular where features are boolean and the metafeatures are products. 
We give an efficient algorithm for finding the fewest product-based metafeatures for a given set of target monomials under an ``anchor-variable'' assumption analogous to the anchor-word assumption of \cite{AroraGM12}, and prove bounds on its performance for learning a series of target functions arriving online.  We then give an extension that learns an approximately-optimal overcomplete  sparse representation (we may have more metafeatures than input features, but each target should have a sparse representation) under a weaker form of assumption we call the ``anchor-set'' assumption (anchor variables no longer make sense in the overcomplete case).  These results can be viewed as giving efficient algorithms for a Boolean autoencoding where given a set of black-and-white pixel images (vectors in $\{0,1\}^n$) we want to find either (a) the fewest ``basic objects'' (also vectors in $\{0,1\}^n$) such that each given image can be reconstructed by superimposing some subset of them (taking their bitwise-OR), or (b) a larger number of such objects such that each image can be reconstructed by superimposing only a few of them.  In the first case our algorithm finds the optimal solution under the anchor-variable assumption (the problem is NP-hard in general) and in the second case it finds a bicriteria approximation (for a given sparsity level, approximates both the number and the sparsity to logarithmic factors) under the weaker anchor-set assumption.

In Section~\ref{se:poly} 
we show how our results can be applied to the case that target functions are polynomials of low $L_1$ norm whose terms share pieces in common (within and across polynomials), a scenario that can be expressed via two levels of product-based metafeatures. Interestingly, as opposed to the algorithms for linear metafeatures, this algorithm periodically re-compactifies its current representation.  In particular, whenever a new polynomial cannot be learned using the current representation and must be learned from scratch, we then revisit the previously-learned polynomials and optimally ``compactify'' them into the fewest number of (possibly
overlapping) conjunctive metafeatures that can be used to recreate all their monomials.

\subsection{Related Work}
Most related work in multi-task or transfer learning considers the case that  all target functions are present simultaneously or that target functions are drawn from some easily learnable distribution.
Baxter~\cite{Baxter97,baxter00} developed some of the earliest  foundations for transfer learning, by providing sample complexity results for achieving low average error in such settings. Other related sample complexity results appear in~\cite{sbd:03}.

Recent work of ~\cite{pm:13,halnips12} considers the problem of learning multiple linear separators that share a common low-dimensional subspace in 
the batch setting where all tasks are given up front. They specifically provide guarantees for a natural ERM algorithm with trace norm regularization.  There has also been work on applying the Group Lasso method to batch multi-task learning which solves a specific multi-task optimization problem \cite{RV12}.
By contrast with these results, our setting is more demanding since we aim to achieve small error on all tasks and to do so online without keeping all training data from past learning tasks in memory. 

\cite{ccg10} considers multi-task learning  where explicit known relationships among tasks are exploited for faster learning.  In their setting each learning problem is an online problem but the collection of learning problems are all occurring simultaneously.  Discussion in~\cite{valiant2000} hints toward the type of the algorithms we analyze in Section~\ref{se:linear}, but without formal analysis about how the error accumulation could harm the sample complexity (which, as we will see, is one of the central challenges in this setting).

The problem of trying to learn invariants or other commonalities when faced with a series of learning tasks arriving over time has a long history in applied machine learning (e.g., \cite{Thrun96b,ThrunM95}).
Our work is the first to give provable efficiency guarantees for learning  multi-layer representations in this  life-long learning setting.

\section{Preliminaries}
We assume we have $m$ learning (binary classification) problems that arrive online over time.  The learning problems are all over a common instance space $X$ of dimension $n$ (e.g., we will consider $X=\R^n$ and $X=\{0,1\}^n$) but each has its own target function and potentially its own distribution $D_i$ over $X$.  Formally, learning problem $i$ is defined by a distribution $P_i$ over $X \times \cY$ where $\cY = \{-1,1\}$ is the label space and $D_i$ is the marginal over $X$ of $P_i$, and the goal of the learning algorithm on problem $i$ is to produce a hypothesis function $h_i$  of small error, where
$\err(h_i) = \err_{P_i}(h_i) = P_{(x,y) \sim P_i}  [h_i(x) \neq y].$

\section{Life-long Learning of Halfspaces}
\label{se:linear}
We consider here the natural case that $X=\R^n$ and each target function is a linear separator
going through the origin; that is, for all $i$ there exists  $a_i$ of unit length such that
for all $(x,y)$ drawn from $P_i$ we have $\sgn(a_i\cdot x) = y$. 
To begin, we will assume that what the target functions have in common is that they all lie in some common $k$-dimensional subspace of $\R^n$ for $k \ll \min(n,m)$.  In particular, let $A$ be a $m$ by $n$ matrix whose rows are $a_1, \ldots, a_m$; then our assumption is that $A$ has rank $k$.
This implies that there exist a decomposition  $A=C W$, where  $W$ is a $k \times n$ matrix and $C$ is a $m \times k$ matrix.  The rows $w_1, \ldots, w_k$ of $W$ can be viewed as $k$  {\em linear metafeatures} that are sufficient to describe all $m$ learning problems, or equivalently we can view this as a network with one middle layer of $k$ hidden linear units.
In fact, our algorithms will work under a more robust condition that allows for the $a_i$ to be ``near'' to a low-dimensional subspace (see Theorem \ref{thm:linmetalogconc}).

In this section we analyze the following online algorithm for this setting; note, the algorithm is very natural but the challenge will be to analyze it and control the propagation of error at reasonable sample sizes. Let $\eps_{acc}$ be a quantity to be determined later.  For the first learning problem  we just learn it to error $\eps_{acc}$ using the original input features and let the resulting weight vector be $\tw_1$. Suppose now we have produced weight vectors $\tw_1, \ldots, \tw_{k'}$ and we are considering problem $i$.  We will first see if we can learn problem $i$ well (to error $\epsilon$) as a linear combination of the $\tw_j$.  If so, then we mark this as a success and go on to problem $i+1$.  If not, then we will learn it to error $\eps_{acc}$ using the input features and add the hypothesis weight vector as $\tw_{k'+1}$.
(See Algorithm~\ref{algo-one} for formal details). The challenge is how small $\eps_{acc}$ needs to be for this to succeed.

We show in the following that if the $D_i$ are isotropic log-concave (which includes many distributions such as Gaussian and uniform, see, e.g.,  \cite{LV07}),
the above procedure will be successful and learn the target functions with much fewer labeled examples in total than by learning each function separately.
We start with some useful facts and present a lemma (Lemma~\ref{lem:subspace}) that is crucial for our analysis.

Given two vectors $a$ and $b$ and a distribution $\tilde{D}$, let
$ \distf_{\tilde{D}}(a,b)=\P_{x \sim \tilde{D}}(\sign(u \cdot x) \neq  \sign(v \cdot x))$.
 Let $\theta(a,b)$ be the angle between two vectors $a$ and $b$. For a vector $a$ and a subspace $V$, let $\theta(a,V) = \min_{b \in V} \theta(a,b)$ be the angle between $a$ and its closest vector in $V$  (in angle).  For subspaces $U$ and $V$, let
$\theta(U,V) = \max_{u \in U} \theta(u,V)$. That is, $\theta(U,V) \leq \alpha$ iff for all $u \in U$ there exists $v \in V$ such that $\theta(u,v) \leq \alpha$.

\begin{lemma}
\label{l:angle}
Assume $D$ is an isotropic log-concave in $R^n$. Then there exist constants $c$ and $c'$ such that for any two unit vectors $u$ and $v$ in $\reals^d$  we have
$ c \anglesep(v,u) \leq \distf_D(u,v) \leq c' \anglesep(v,u).$
\end{lemma}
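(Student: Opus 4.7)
The plan is to reduce to a two-dimensional problem and then invoke the standard quantitative facts about isotropic log-concave densities. First I would observe that the event $\sign(u \cdot x) \ne \sign(v \cdot x)$ depends only on the projection $\proj_V(x)$ of $x$ onto the two-dimensional subspace $V = \span(u,v)$, since $u \cdot x = u \cdot \proj_V(x)$ and similarly for $v$. Because any marginal of an isotropic log-concave distribution on a subspace is again isotropic log-concave (see \cite{LV07}), it suffices to prove the claim for a two-dimensional isotropic log-concave density $f$ on $\R^2$.

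Next I would describe the disagreement region geometrically. Choose polar coordinates in which $u$ and $v$ are symmetric about the positive first axis. The set where the two linear classifiers disagree is a symmetric pair of opposite open wedges with apex at the origin, each of angular width exactly $\theta = \theta(u,v)$; thus
\[
\distf_D(u,v) \;=\; 2\int_{-\theta/2}^{\theta/2}\!\int_0^\infty f(r,\phi)\, r\, dr\, d\phi.
\]

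For the upper bound I would use two standard properties of two-dimensional isotropic log-concave densities proved in \cite{LV07}: (i) $f$ is bounded above by some absolute constant $K$, and (ii) $f$ has exponential tails, $f(x) \le K_1 e^{-K_2 \|x\|}$. Splitting the inner radial integral at $r=1$ and applying (i) on $[0,1]$ and (ii) on $[1,\infty)$, the quantity $\int_0^\infty f(r,\phi)\, r\, dr$ is bounded by an absolute constant independent of $\phi$. Integrating over $\phi \in [-\theta/2,\theta/2]$ and multiplying by $2$ yields the bound $c'\theta$.

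For the lower bound I would use another standard fact from \cite{LV07}: there exist absolute constants $k, r_0 > 0$ such that $f(x) \ge k$ on the ball $B(0,r_0)$. Each of the two opposite wedges intersected with $B(0,r_0)$ is a circular sector of radius $r_0$ and opening angle $\theta$, of area $\tfrac12 r_0^2 \theta$, on which the density is at least $k$. Summing the two sectors gives mass at least $k r_0^2 \theta = c\,\theta$. The main obstacle is essentially bookkeeping: carefully citing the right log-concavity facts and handling the symmetry of the bow-tie region; the underlying geometry and calculus are routine once the reduction to the two-dimensional marginal has been carried out.
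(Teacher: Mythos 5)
Your argument is correct, and for the upper bound it is essentially the paper's proof: project onto the two-dimensional span of $u$ and $v$, use that the marginal is isotropic log-concave with an exponentially decaying density, and bound the polar integral over a wedge of angular width $\theta$ by a constant times $\theta$ (the paper uses the single bound $g(z)\le k_3 e^{-k_4\|z\|}$ for the whole radial integral, so your split at $r=1$ with a separate sup-norm bound is not even needed, but it is harmless). Where you genuinely differ is the lower bound: the paper does not prove it at all, citing \cite{BalcanLong:13} instead, whereas you give a short self-contained argument using the standard fact that a two-dimensional isotropic log-concave density is bounded below by an absolute constant on a ball of constant radius, so each disagreement wedge contributes mass at least proportional to its sector area $\tfrac12 r_0^2\theta$. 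This is a correct and arguably preferable route, since it keeps the lemma self-contained with only the facts already quoted from \cite{LV07}. One cosmetic slip: with $u,v$ placed symmetrically about the first axis, the disagreement wedges are bounded by the lines $u\cdot x=0$ and $v\cdot x=0$ and hence are centered on the \emph{second} axis, not on $\phi\in[-\theta/2,\theta/2]$; since the bounds you use are rotation-invariant (or since you may simply recenter coordinates on the wedge), this does not affect the estimate, but the limits of integration should be adjusted accordingly.
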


\begin{proof} The proof of the lower bound appears in~\cite{BalcanLong:13}. The proof of the upper bound is implicit in the earlier work of~\cite{Vem10} -- we provide it here for completeness.
The key idea is to project the region of disagreement  in the space given by the two normal vectors, and then using properties of log-concave distributions in 2-dimensions.
Specifically, consider the plane determined by $u$ and $v$, and let $g$ be the
2-dimensional marginal of the density function over this plane. Then $g$ is an isotropic and log-concave density function over $R^2$.

It is known~\cite{KLT09} that for some constants $k_3,k_4$ we have
$g(z) \leq k_3 e^{- k_4 ||z||}.$
Given this fact, we just need to show that the integral of $k_3e^{- k_4
||z||}$ over the region $\{z:u \cdot z \geq 0, v \cdot z \leq 0\}$
is at most $c'\alpha$ for some constant $c'$, where $\alpha$ is the
angle between $u$ and $v$ (the integral over $\{z:u \cdot z \leq 0, v
\cdot z \geq 0\}$ is analogous).  In particular, using polar
coordinates, we can write the integral as:
$$\int_{\theta=0}^{\alpha} \int_{r=0}^{\infty} f(r\cos \theta, r \sin
\theta)\; r \; dr d\theta \leq \int_{\theta=0}^{\alpha} \int_{r=0}^{\infty} r k_3 e^{-k_4 r}dr
d\theta.$$
The inner integral evaluates to a constant
 and
therefore the entire integral is bounded by $c'\alpha$ for some
constant $c'$ as desired.
\end{proof}

Lemma \ref{l:angle} implies that if we learn some target $a_i$ to error $\epsilon_{acc}$, then the angle between our learned vector and the target will be $O(\epsilon_{acc})$.  In the other direction, if the target lies in subspace $W$ and we have learned a subspace $\tilde{W}$ such that $\theta(W,\tilde{W})$ is small, then there will exist a low-error weight vector in $\tilde{W}$.   Ideally, we would therefore like to say that if we construct a subspace $\tilde{W}$ out of vectors $\tilde{w}_i$ that individually are close to their associated targets $w_i$, then $\tilde{W}$ is close to the span $W$ of the $w_i$.  Unfortunately, this is not in general true if the targets are close to each other, e.g., see Figure \ref{figure}(a).  We will address this by using the fact that each $\tilde{w}_i$ was {\em not} learnable using the span of the previous $\tilde{w}_j$.  We begin with a helper lemma (Lemma \ref{lem:one-vec}), which can be viewed as addressing the special case that all previous targets have been learned {\em perfectly}, and then present our main lemma (Lemma \ref{lem:subspace}).

\begin{figure}
\centerline{\includegraphics{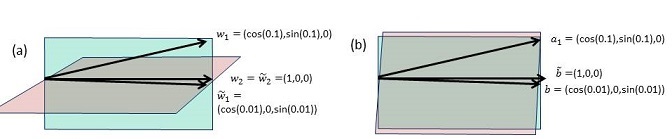}}
\caption{{(a) Even though each $\tilde{w}_i$ is within angle 0.11 of its corresponding $w_i$ ($\theta(w_2,\tilde{w}_2)=0$, $\theta(w_1,\tilde{w}_1) \leq \theta(w_1,w_2) + \theta(w_2,\tilde{w}_1) = 0.11$), the two subspaces are orthogonal ($\span(w_1,w_2)$ is the $x$-$y$ plane and $\span(\tilde{w}_1,\tilde{w}_2)$ is the $x$-$z$ plane).  
(b) For intuition for Lemma \ref{lem:one-vec}: now, the two subspaces $V,\tilde{V}$ {\em are} close in angle, with angle at most $\frac{\pi}{2}$ times the ratio of $\theta(\tilde{b},b) = 0.01$ to $\theta(\tilde{b},a_1) = 0.1$. \label{figure}}}
\end{figure}

\begin{lemma}\label{lem:one-vec}
Let $U=\span\{a_1, \ldots, a_{k-1}\}, V = \span\{a_1, a_2, \ldots, a_{k-1}, b\}$ and $\tilde{V} = \span\{a_1,\ldots, a_{k-1}, \tilde{b}\}$ be sets of vectors in $\R^n$. Then, 
\[
\theta(V,\tilde{V}) \le \frac{\pi}{2}\frac{\theta(\tilde{b},b)}{\theta(\tilde{b},U)}.
\]
\end{lemma}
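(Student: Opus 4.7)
The plan is to reduce $\theta(V, \tilde V)$ to a 1-dimensional angle in $U^\perp$ and then bound that angle by direct trigonometry. WLOG take $\|b\|=\|\tilde{b}\|=1$ and decompose $b = b_U + b_\perp$, $\tilde{b} = \tilde{b}_U + \tilde{b}_\perp$ along $U$ and $U^\perp$. Set $\phi = \theta(\tilde{b}, U)$ (so $\|\tilde{b}_U\|=\cos\phi$, $\|\tilde{b}_\perp\|=\sin\phi$), $\delta = \theta(\tilde{b}, b)$, and $\alpha = \theta(b_\perp, \tilde{b}_\perp)$; if $\delta > \phi$ the claimed bound is trivial since $\theta(V,\tilde{V}) \le \pi/2$, so I may assume $\delta \le \phi \le \pi/2$. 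Since $V = U \oplus \spann(b_\perp)$ and $\tilde{V} = U \oplus \spann(\tilde{b}_\perp)$ as orthogonal direct sums, any $v = u + \beta b_\perp \in V$ projects onto $\tilde{V}$ as $u + \beta \tfrac{\langle b_\perp, \tilde{b}_\perp\rangle}{\|\tilde{b}_\perp\|^2}\tilde{b}_\perp$; squaring the residual gives
\[
\sin^2\theta(v,\tilde{V}) \;=\; \frac{\beta^2\|b_\perp\|^2 \sin^2\alpha}{\|u\|^2 + \beta^2\|b_\perp\|^2},
\]
which is maximized at $\|u\|=0$, so $\theta(V,\tilde{V}) = \alpha$ and it suffices to bound $\alpha$.

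Next, I would show $\sin\alpha \le \sin\delta/\sin\phi$. Expanding $\cos\delta = \langle b, \tilde{b}\rangle = \langle b_U, \tilde{b}_U\rangle + r\sin\phi\cos\alpha$ with $r = \|b_\perp\|$, an adversary trying to maximize $\alpha$ saturates Cauchy--Schwarz on $\langle b_U, \tilde{b}_U\rangle \le \sqrt{1-r^2}\cos\phi$, giving
\[
\cos\alpha \;=\; \frac{\cos\delta - \sqrt{1-r^2}\cos\phi}{r\sin\phi}.
\]
Setting the $r$-derivative to zero yields the critical radius $\sqrt{1-r^2} = \cos\phi/\cos\delta$, and substitution produces $\sin\alpha = \sin\delta/\sin\phi$ at the extremum (one checks the endpoints $r = \sin(\phi\pm\delta)$ both give $\alpha=0$, so the interior critical point is indeed a maximum of $\alpha$).

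Finally, I would convert $\alpha \le \arcsin(\sin\delta/\sin\phi)$ into the lemma's bound $\tfrac{\pi}{2}(\delta/\phi)$. Setting $t = \delta/\phi \in [0,1]$, this reduces to $\sin(t\phi)/\sin\phi \le \sin(\tfrac{\pi}{2}t)$, which I would establish by showing $\phi \mapsto \sin(t\phi)/\sin\phi$ is nondecreasing on $(0,\pi/2]$: its derivative has the sign of $t\tan\phi - \tan(t\phi) \ge 0$, by convexity of $\tan$ on $[0,\pi/2)$, so its maximum over this interval occurs at $\phi = \pi/2$ and equals $\sin(\tfrac{\pi}{2}t)$. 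The main obstacle is the middle step: pinning down the worst-case perturbation $b$ at angle $\delta$ from $\tilde{b}$ via the Cauchy--Schwarz / calculus optimization. Once $\sin\alpha \le \sin\delta/\sin\phi$ is in hand, the reduction at the start and the trigonometric conversion at the end are routine.
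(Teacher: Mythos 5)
Your proposal is correct, and it reaches the bound by a genuinely different execution than the paper, even though both arguments ultimately rest on the same trigonometric core (an inequality of the form $\sin\theta(\tilde b,b) \ge \sin\theta(\tilde b,U)\cdot\sin\theta(V,\tilde V)$ followed by a calculus conversion to the linear-in-angles bound). The paper takes the worst unit vector $u\in V$, writes $u\in\spann(u_1,b)$ with $u_1\in U$, reduces to the dihedral angle between $\spann(u_1,b)$ and $\spann(u_1,\tilde b)$ in $3$ dimensions, and reads off $\sin\theta(b,\tilde b)\ge\sin\alpha\sin\beta$ as the distance of $\tilde b$ to the $x$-$y$ plane (with $\beta=\theta(\tilde b,u_1)\ge\theta(\tilde b,U)$); it then merely asserts the final inequality $\alpha\beta\le\frac{\pi}{2}\sin^{-1}(\sin\alpha\sin\beta)$. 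You instead decompose along $U\oplus U^{\perp}$, prove the cleaner exact identity $\theta(V,\tilde V)=\theta(\spann(b_\perp),\spann(\tilde b_\perp))$, and get the key estimate $\sin\alpha\le\sin\delta/\sin\phi$ by a Cauchy--Schwarz-plus-calculus optimization over $r=\|b_\perp\|$; you also actually prove the final conversion $\sin(t\phi)/\sin\phi\le\sin(\tfrac{\pi}{2}t)$ via convexity of $\tan$, which the paper leaves unproved. Two remarks: your middle step, which you flag as the main obstacle, can be collapsed to one line in your own framework --- since $\spann(b)\subseteq V$, $\sin\phi\,\sin\alpha=\|\tilde b_\perp-P_{\spann(b_\perp)}\tilde b_\perp\|=\mathrm{dist}(\tilde b,V)\le\mathrm{dist}(\tilde b,\spann(b))=\sin\delta$ --- which is essentially the paper's distance computation transplanted to your decomposition, so the adversarial optimization (and its endpoint/feasibility checks, e.g.\ when $\phi+\delta>\pi/2$ the upper endpoint is $r=1$ rather than $\sin(\phi+\delta)$) is avoidable; and you should note the degenerate cases $b\in U$ or $\tilde b\in U$ (handled explicitly in the paper) and that $\alpha$ should be read as the angle between the lines $\spann(b_\perp),\spann(\tilde b_\perp)$, which your lower bound on $\cos\alpha$ (being nonnegative when $\delta\le\phi$) does cover. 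None of these affect correctness; your route buys an exact formula for $\theta(V,\tilde V)$ and a self-contained proof of the closing inequality, at the cost of a heavier optimization where a direct projection argument suffices.
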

\begin{proof}
(For intuition, see Figure \ref{figure}(b).)
First, we may assume $b,\tilde{b} \not\in U$ because if $b \in U$ then $\theta(V,\tilde{V})=0$ and if $\tilde{b} \in U$ then $\theta(\tilde{b},U)=0$.  Additionally we may assume $b$ and $\tilde{b}$ are unit-length vectors since we are interested only in angles.  Next, let $u \in V$ be the unit-length vector in $V$ of farthest angle from $\tilde{V}$, i.e., $\theta(u,\tilde{V}) = \theta(V,\tilde{V})$.  We can write $u$ as a linear combination of $b$ and some vector $u_1 \in U$, and will prove the lemma by showing there must be some nearby vector in the span of $u_1$ and $\tilde{b}$.  Specifically, using the fact that $\theta(V,\tilde{V}) = \theta(u,\tilde{V}) \leq \theta(\span(u_1,b),\span(u_1,\tilde{b}))$ and the fact that $\theta(\tilde{b},U) \leq \theta(\tilde{b},u_1)$, it is sufficient to prove that:
$$\theta(\span(u_1,b),\span(u_1,\tilde{b})) \leq \frac{\pi}{2}\frac{\theta(b,\tilde{b})}{\theta(\tilde{b},u_1)},$$
which is just a statement about 3-d space.  Let $\alpha = \theta(\span(u_1,b),\span(u_1,\tilde{b}))$ and $\beta = \theta(\tilde{b},u_1)$.  We can wlog write $u_1 = (1,0,0)$ and assume $\span(u_1,b)$ is the $x$-$y$ plane.   Since $\span(u_1,\tilde{b})$ has angle $\alpha$ with the $x$-$y$ plane and intersects the $x$-axis, we can write  $\tilde{b} = \cos(\beta)u_1 + \sin(\beta)u_2$ for $u_2 = (0, \cos(\alpha), \sin(\alpha))$. Now, $\theta(b,\tilde{b})$ is at least $\sin^{-1}$ of the Euclidean distance of $\tilde{b}$ to the $x$-$y$ plane, which is $\sin(\alpha)\sin(\beta)$.   The lemma then follows from the fact that $\alpha\beta \leq \frac{\pi}{2}\sin^{-1}(\sin(\alpha)\sin(\beta))$ for $0 \leq \alpha,\beta \leq \frac{\pi}{2}$.
\end{proof}

The key point of the next  lemma is that the errors (angles between $a_i$ and $\ta_i$) only contribute additively to the overall angle gap between subspaces so long as each new learned vector is sufficiently far from the previously-learned subspace.   In contrast, 
a difficulty with the usual analysis of perturbation of matrices is that while we can assume that each new $\tilde{a}_i$ is far from the span of the previous $\tilde{a}_1,\ldots, \tilde{a}_{i-1}$, we do not have control over its distance to the span of the past and future vectors $\{\tilde{a}_1, \ldots, \tilde{a}_{i-1}, \tilde{a}_{i+1}, \ldots, \tilde{a}_k\}$ as in the definition of the {\em height} of a matrix (e.g., \cite{Spielman}).
Note also that even adding the {\em same} vector to two different subspaces can potentially {increase} their angle (e.g., in Figure \ref{figure}(a), $\theta(w_1,\tilde{w}_1) < 0.11$ but $\theta(\span(w_1,w_2),\span(\tilde{w}_1,w_2) = \pi/2$). 

\begin{lemma}
\label{lem:subspace}
Let $V_k = \span\{a_1, \ldots, a_k\}$ and $\tilde{V}_k = \span\{\tilde{a}_1, \ldots, \tilde{a}_k\}$. Let $\epsilon_{acc}, \gamma \ge 0$ and $\epsilon_{acc} \le \gamma^2/(10k)$. Assume
for $i=2, \ldots, k$ that $\theta(\tilde{a}_i,\tilde{V}_{i-1}) \ge \gamma$, and for $i=1,\ldots, n$, $\theta(a_i, \tilde{a}_i) \le \epsilon_{acc}$.
Then
\[
\theta(V_k,\tilde{V}_k) \le 2k\frac{\epsilon_{acc}}{\gamma}.
\]
\end{lemma}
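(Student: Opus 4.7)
Plan: induct on $k$. The base case $k=1$ is immediate from the definitions since $\theta(V_1,\tilde V_1) = \theta(a_1,\tilde a_1)\le \epsilon_{acc}\le 2\epsilon_{acc}/\gamma$ (the hypothesis forces $\gamma\le 1$). For the inductive step assume $\theta(V_{k-1},\tilde V_{k-1})\le 2(k-1)\epsilon_{acc}/\gamma$, and introduce as an intermediate the subspace
$\hat V_k = V_{k-1} + \spann(\tilde a_k),$
which agrees with $V_k$ in the ``old'' $k-1$ vectors and with $\tilde V_k$ in the ``new'' one. The goal is to show, by a triangle inequality for the subspace distance $\arcsin\|P_U - P_V\|$, that
$\theta(V_k,\tilde V_k) \le \theta(V_k,\hat V_k) + \theta(\hat V_k,\tilde V_k) \le 2\epsilon_{acc}/\gamma + 2(k-1)\epsilon_{acc}/\gamma.$

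The first step in this bookkeeping is to observe that the inductive hypothesis transfers the ``$\gamma$-well-separatedness'' from $\tilde V_{k-1}$ over to $V_{k-1}$: by the same metric triangle inequality, $\theta(\tilde a_k, V_{k-1}) \ge \theta(\tilde a_k,\tilde V_{k-1}) - \theta(\tilde V_{k-1},V_{k-1}) \ge \gamma - 2(k-1)\epsilon_{acc}/\gamma$, and the assumption $\epsilon_{acc}\le \gamma^2/(10k)$ forces this to be at least $4\gamma/5$. This is exactly what is needed for the denominator in Lemma~\ref{lem:one-vec}: applied with $U=V_{k-1}$, $b=a_k$, $\tilde b=\tilde a_k$, it gives
$\theta(V_k,\hat V_k) \le \tfrac{\pi}{2}\cdot \tfrac{\theta(a_k,\tilde a_k)}{\theta(\tilde a_k,V_{k-1})} \le \tfrac{\pi}{2}\cdot \tfrac{\epsilon_{acc}}{4\gamma/5} \le 2\epsilon_{acc}/\gamma,$
giving the first summand in the telescoping bound.

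The hard part, and what I expect to be the main obstacle, is the second summand: bounding $\theta(\hat V_k,\tilde V_k)$ by $\theta(V_{k-1},\tilde V_{k-1})$. Both subspaces contain $\tilde a_k$, so their quotient by $\spann(\tilde a_k)$ (equivalently, orthogonal projection onto $\tilde a_k^\perp$) equals $\pi(V_{k-1})$ and $\pi(\tilde V_{k-1})$ respectively. A naive argument that projecting onto $\tilde a_k^\perp$ preserves the angle between two $(k-1)$-dimensional subspaces can lose a factor of $1/\sin\theta(\tilde a_k,V_{k-1})\approx 1/\gamma$, which is fatal: it would make the recursion grow like $\theta(V_k,\tilde V_k) \sim \theta(V_{k-1},\tilde V_{k-1})/\gamma$, i.e.\ exponentially in $k$, rather than the additive growth claimed. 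Avoiding this blow-up is where the full strength of the hypothesis $\theta(\tilde a_k,\tilde V_{k-1})\ge \gamma$ (combined with its transferred version $\theta(\tilde a_k,V_{k-1})\ge 4\gamma/5$ from the previous paragraph) has to be used: because $\tilde a_k$ is uniformly far from \emph{both} $V_{k-1}$ and $\tilde V_{k-1}$, projection onto $\tilde a_k^\perp$ is a near-isometry restricted to unit vectors of either subspace, and the principal angles between $\pi(V_{k-1})$ and $\pi(\tilde V_{k-1})$ should match those between $V_{k-1}$ and $\tilde V_{k-1}$ up to a multiplicative factor close to $1$ (not $1/\gamma$) once this separation is accounted for carefully.

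Once this second step is in hand, combining the two summands via the triangle inequality closes the induction: $\theta(V_k,\tilde V_k) \le 2\epsilon_{acc}/\gamma + 2(k-1)\epsilon_{acc}/\gamma = 2k\epsilon_{acc}/\gamma$. The conceptual role of the ``each new $\tilde a_i$ is $\gamma$-far from $\tilde V_{i-1}$'' hypothesis, as emphasized in the paper's discussion, is precisely to keep both the Lemma~\ref{lem:one-vec} denominator in Step~1 and the quotient-distortion factor in the hard step bounded, so that the $k$ accumulated errors $\epsilon_{acc}/\gamma$ add rather than multiply.
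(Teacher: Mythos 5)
Your first summand is handled exactly as in the paper: the paper also introduces the intermediate subspace $V_k'=\span(V_{k-1},\tilde{a}_k)$, applies the triangle inequality, bounds $\theta(V_k,V_k')$ by Lemma~\ref{lem:one-vec}, and lower-bounds the denominator by transferring the separation via $\theta(\tilde{a}_k,V_{k-1})\ge\theta(\tilde{a}_k,\tilde{V}_{k-1})-\theta(V_{k-1},\tilde{V}_{k-1})\ge\gamma-2(k-1)\epsilon_{acc}/\gamma$. The gap is the second summand, which you explicitly leave as a hope, and the specific claim you hope to prove is false. It is not true that when $\tilde{a}_k$ is $\gamma$-far from both $V_{k-1}$ and $\tilde{V}_{k-1}$, adjoining it (equivalently, projecting onto $\tilde{a}_k^{\perp}$) distorts the subspace angle only by a factor close to $1$: take $\tilde{a}_k=e_1$, $V_{k-1}=\span(v)$, $\tilde{V}_{k-1}=\span(\tilde{v})$ with $v=\cos\gamma\,e_1+\sin\gamma\,e_2$ and $\tilde{v}=\cos\gamma\,e_1+\sin\gamma(\cos\psi\,e_2+\sin\psi\,e_3)$. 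Then $\theta(e_1,V_{k-1})=\theta(e_1,\tilde{V}_{k-1})=\gamma$ and $\theta(V_{k-1},\tilde{V}_{k-1})\approx\psi\sin\gamma$, yet $\span(v,e_1)=\span(e_1,e_2)$ and $\span(\tilde{v},e_1)=\span(e_1,\cos\psi\,e_2+\sin\psi\,e_3)$ meet at angle $\psi$. So the distortion is genuinely of order $1/\sin\gamma$, exactly the $1/\gamma$ factor you identified as fatal, and your recursion would read $\theta_k\lesssim 2\epsilon_{acc}/\gamma+\theta_{k-1}/\gamma$ rather than being additive; the separation hypothesis alone cannot rescue the quotient argument at the level of generality at which you invoke it.

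The paper closes this step by a different mechanism, not by comparing the two quotients: it strengthens the induction hypothesis so that the lemma is proved for $\span(W,a_1,\ldots,a_k)$ versus $\span(W,\tilde{a}_1,\ldots,\tilde{a}_k)$ with an arbitrary fixed subspace $W$ adjoined to \emph{both} spans, and then bounds the second summand $\theta(V_k',\tilde{V}_k)$ by invoking that hypothesis at level $k-1$ with $W=\span(\tilde{a}_k)$. In that scheme every per-step error is charged, through Lemma~\ref{lem:one-vec}, to the single newly swapped pair $(a_i,\tilde{a}_i)$, with a denominator kept near $\gamma$, so the $k$ contributions $O(\epsilon_{acc}/\gamma)$ add; no claim about angle preservation under adjoining a common vector is ever made (indeed the paper's Figure~\ref{figure}(a) is a warning that such claims fail). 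To repair your proof you would need to replace the projection/near-isometry step by this $W$-augmented induction and verify carefully which separation hypotheses survive when $W=\span(\tilde{a}_k)$ is adjoined --- that bookkeeping, not the quotient geometry, is where the actual content of the inductive step lies.
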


\begin{proof}
The proof is by induction on $k$, on the stronger hypothesis that the conclusion holds for $V_k = \span\{W,a_1, \ldots,a_k\}$ and
$\tilde{V}_k = \span\{W,\tilde{a}_1,\ldots, \tilde{a}_k\}$ for any fixed subspace $W$.  Note that the base case ($k=1$), follows directly from Lemma \ref{lem:one-vec}, using $W=\tilde{V}_{k-1}=U$, $\tilde{a}_1 = \tilde{b}$, and $a_1 = b$.  Now, let $V_k' = \span(V_{k-1},\tilde{a}_k)$.  Then we have:
\begin{eqnarray*}
\theta(V_k,\tilde{V}_k) &\le& \theta(V_k,V_k') + \theta(V_k',\tilde{V}_k)\\
& & \mbox{[by triangle inequality]}\\
&\le&\frac{\pi}{2}\frac{\theta(\tilde{a}_k,a_k)}{\theta(\tilde{a}_k,V_{k-1})} + \frac{2(k-1)\epsilon_{acc}}{\gamma}\\
& &\mbox{[the first term is by Lemma \ref{lem:one-vec}, and the second term is by induction using $W = \span(\tilde{a}_k)$]}\\
&\le&\frac{\pi}{2}\frac{\epsilon_{acc}}{\theta(\tilde{a}_k,\tilde{V}_{k-1})-\theta(V_{k-1},\tilde{V}_{k-1})} + \frac{2(k-1)\epsilon_{acc}}{\gamma}\\
& & \mbox{[by triangle inequality: $\theta(\tilde{a}_k,\tilde{V}_{k-1}) \leq \theta(\tilde{a}_k,V_{k-1}) + \theta(V_{k-1},\tilde{V}_{k-1})$]}\\
&\le&\frac{\pi}{2}\frac{\epsilon_{acc}}{\gamma - \frac{2(k-1)\epsilon_{acc}}{\gamma}} + \frac{2(k-1)\epsilon_{acc}}{\gamma}\\
& & \mbox{[by assumption and by induction]}\\
&\le&\frac{\epsilon_{acc}}{\gamma}\left(\frac{\pi}{2}\frac{\gamma^2}{\gamma^2 - 2(k-1)\epsilon_{acc}} + 2(k-1)\right)\\
&\le& 2k\epsilon_{acc}/\gamma, 
\end{eqnarray*}
where the last step comes from using $\epsilon_{acc} \le \gamma^2/(10(k-1))$.
\end{proof}

\begin{algorithm}
{\bf Input}: $n$,$m$,$k$, access to labeled examples for problems $i \in \{1, \ldots, m\}$, parameters $\epsilon$ and $\epsilon_{acc}$.
\begin{enumerate}\setlength{\itemsep}{0pt}
\item Learn the first target to error $\epsilon_{acc}$ to get an $n$-dimensional vector $\talpha_1$.  Set $\tw_1 = \talpha_1$; $\tk=1$ and $i_1=1$.
\item For the learning problem  $i=2 $ to $m$
\begin{itemize}\vspace*{-0.1in}
\item Attempt to learn using the representation $v \rightarrow (\tw_1 \cdot v, ...,\tw_{\tk} \cdot v)$.
I.e., check if for learning problem $i$ there exists a hypothesis
$ \sgn(\alpha_{i,1} (\tw_1 \cdot v)+ \cdots + \alpha_{i,\tk} (\tw_{\tk} \cdot v))$ of error at most $\epsilon$.
\begin{itemize}
\item[(a)] If yes, set $\tc_i=(\alpha_{i,1}, \ldots, \alpha_{i,\tk} , 0, \ldots, 0)$.
\item[(b)] If not,  learn  a classifier $\talpha_i$ for problem $i$ of accuracy $\epsilon_{acc}$ by using the original features.
Set $\tk=\tk+1 $,  $i_{\tk}=i$ ,
  $\tw_{\tk} = \talpha_i$, and  $\tc_i = e_{\tk}$.
  \end{itemize}
\end{itemize}

\item Let $\tW$ be an $\tk \times n$ matrix whose rows are $\tw_1, \ldots, \tw_{\tk}$ and let $\tC$ be the matrix $m \times \tk$ matrix whose rows are
 $\tc_1, \ldots, \tc_{\tk}$. Compute $\tA=\tC \tW$.
\end{enumerate}
{\bf Output}: $m$ predictors; predictor $i$ is $v \rightarrow \sgn(\tA_i \cdot v)$
\caption{Life-long learning of halfspaces sharing a common low-dimensional subspace\label{algo-one}}
\end{algorithm}

We now put these together to analyze Algorithm \ref{algo-one} when target functions lie on, or close to, a low-dimensional subspace. Specifically, say that a subsequence of target functions $a_{i_1}, a_{i_2}, \ldots$ is $\gamma$-separated if each $a_{i_j}$  has angle greater than $\gamma$ from the span of the previous $a_{i_1}, \ldots, a_{i_{j-1}}$.  Define the {\em $\gamma$-effective dimension} of targets $a_1, a_2, \ldots, a_m$ as the size of the largest $\gamma$-separated subsequence.   Our assumption will be that the 
$\gamma$-effective dimension of the targets is at most $k$ for $\gamma = c\epsilon$ for some absolute constant $c>0$, where $\epsilon$ is our desired error rate per target.  
Note that for $\gamma=0$,  $\gamma$-effective dimension equals the dimension of the subspace spanned, and for $\gamma>0$ this allows the targets to just be ``near'' to a low-dimensional subspace.

\begin{theorem}  Assume that all marginals $D_i$ are isotropic log-concave. 
Choose $\gamma = c_1\epsilon$ and $\epsilon_{acc}$ s.t.
$2k\frac{\eps_{acc}}{\gamma} + \gamma = c_2\epsilon$ 
for sufficiently small constants $c_1,c_2>0$.
Consider running Algorithm~\ref{algo-one} 
with parameters $\epsilon$ and $\epsilon_{acc}$ on any sequence of targets whose $\gamma$-effective dimension is at most $k$.
Then $\tk \leq k$ (the rank of $\tA$ is at most $k$).
Moreover the total   number of labeled examples needed to learn all the problems to error $\epsilon$
is $\tO(nk/\epsilon_{acc}+ k m /\epsilon) = \tO(nk^2/\epsilon^2 + km/\epsilon)$.
\label{thm:linmetalogconc}
\end{theorem}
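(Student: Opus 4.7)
First, I would identify the rounds $i_1<i_2<\cdots<i_{\tk}$ at which Algorithm~\ref{algo-one} invokes the ``not'' branch, let $\tw_j$ denote the halfspace learned at round $i_j$ to error $\epsilon_{acc}$ in the ambient representation, and write $V_j := \span(a_{i_1},\dots,a_{i_j})$ and $\tilde V_j := \span(\tw_1,\dots,\tw_j)$. The heart of the proof is an induction on $j\le\tk$ establishing jointly that (a) the subsequence $a_{i_1},\dots,a_{i_j}$ is $\gamma$-separated, and (b) $\theta(V_j,\tilde V_j)\le 2j\epsilon_{acc}/\gamma$. Once (a) holds at $j=\tk$, the $\gamma$-effective-dimension hypothesis immediately forces $\tk\le k$, so the rank of $\tilde A$ is at most $k$.

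For the inductive step, the firing of the ``not'' branch at round $i_j$ means that no hypothesis built from $\tilde V_{j-1}$ has error at most $\epsilon$ on problem $i_j$. Translating this via the upper bound $\distf_D\le c'\theta$ of Lemma~\ref{l:angle} yields $\theta(a_{i_j},\tilde V_{j-1})\ge \epsilon/c'$. Combining with the inductive bound (b) at step $j-1$ and a subspace-angle triangle inequality,
\[
\theta(a_{i_j},V_{j-1}) \;\ge\; \theta(a_{i_j},\tilde V_{j-1}) - \theta(V_{j-1},\tilde V_{j-1}) \;\ge\; \frac{\epsilon}{c'} - \frac{2(j-1)\epsilon_{acc}}{\gamma}.
\]
With $\gamma=c_1\epsilon$ and the relation $2k\epsilon_{acc}/\gamma+\gamma=c_2\epsilon$ for small $c_1<c_2<1/c'$, the right-hand side remains at least $\gamma$ for every $j\le k$, proving (a). For (b), the lower bound $c\theta\le\distf_D$ of Lemma~\ref{l:angle} applied to $\tw_j$ gives $\theta(\tw_j,a_{i_j})\le \epsilon_{acc}/c$, and one further triangle inequality combined with (a) delivers $\theta(\tw_j,\tilde V_{j-1})\ge \gamma$. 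These are precisely the hypotheses of Lemma~\ref{lem:subspace} applied to $a_{i_1},\dots,a_{i_j}$ and $\tw_1,\dots,\tw_j$, and it outputs $\theta(V_j,\tilde V_j)\le 2j\epsilon_{acc}/\gamma$. The side condition $\epsilon_{acc}\le\gamma^2/(10k)$ required by Lemma~\ref{lem:subspace} follows from the parameter relation, which forces $\epsilon_{acc}=\Theta(\epsilon^2/k)$.

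With $\tk\le k$ in hand, the error and sample-complexity bookkeeping is routine. A ``yes'' problem outputs a hypothesis whose error was certified to be at most $\epsilon$ at the time of the search, and appending zeros to $\tc_i$ in subsequent rounds does not alter its predictions. A ``not'' problem $i_j$ outputs $\sgn(\tw_j\cdot v)$ with error $\epsilon_{acc}\le\epsilon$. The ``not'' branch fires at most $k$ times, each instance costing $\tilde O(n/\epsilon_{acc})$ labeled examples to learn a halfspace in $\R^n$ to error $\epsilon_{acc}$ under an isotropic log-concave marginal, for a total of $\tilde O(nk/\epsilon_{acc})=\tilde O(nk^2/\epsilon^2)$. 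Each of the $m$ problems also runs the ``yes'' search, which is ERM for a halfspace in the learned representation of dimension $\tk\le k$ to error $\epsilon$; this costs $\tilde O(k/\epsilon)$ per problem and $\tilde O(km/\epsilon)$ overall.

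The main obstacle is the mild circularity in the inductive step: part (a), a statement about the geometry of the unknown targets inside the unknown space $V_{j-1}$, is being deduced from the algorithm's failure inside the learned space $\tilde V_{j-1}$, whose closeness to $V_{j-1}$ is exactly the content of part (b) at the previous step. Pushing the induction through $k$ steps without the accumulated perturbation $2(j-1)\epsilon_{acc}/\gamma$ consuming the separation margin $\epsilon/c'$ is what the joint constraint $\gamma=c_1\epsilon$ together with $2k\epsilon_{acc}/\gamma+\gamma=c_2\epsilon$ is calibrated to guarantee, and it is also the link that turns the ``not''-branch count into the $\gamma$-effective dimension of the target sequence.
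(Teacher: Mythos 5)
Your proposal is correct and follows essentially the same route as the paper's proof: the same two-part induction (the from-scratch targets are $\gamma$-separated, and the learned vectors are $\gamma$-separated from their predecessors), using the two directions of Lemma~\ref{l:angle} and invoking Lemma~\ref{lem:subspace} to control $\theta(V_{j-1},\tilde V_{j-1})$, with the same parameter calibration; the only difference is cosmetic, in that you establish $\gamma$-separation of the true targets directly by a triangle-inequality lower bound where the paper argues by contradiction (and you make the sample-complexity bookkeeping slightly more explicit).
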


\begin{proof}
We divide problems in two types: problems of  type (a) are those for which we can learn a classifier of error at most
 $\epsilon$ by using the previously learnt problems; the rest are of type (b).

For problems of type (a) we achieve  error $\epsilon$ by design.
For each problem $i$ of type (b) we open a new row in $\tW$, and set $\tw_{\ttk}=\alpha_i$, where $\ttk$ is such that $i_{\ttk}=i$.  We also set $\tc_i = e_{\ttk}$, so  $\ta_{i}= \alpha_i$. Since $\alpha_i$ has error at most $\epsilon_{acc}$, we have $\theta(\tw_{\ttk},a_{i_{\ttk}}) \leq \epsilon_{acc}/c$ for some absolute constant $c$ (by Lemma \ref{l:angle}).

We next  show that $\tk \leq k$.
We prove by induction that for each $\tw_{\ttk}$ we create for a problem $i=i_{\ttk}$, we have both (1) $a_{i_{\ttk}}$ is $\gamma$-far from
$\span\{a_{i_1}, \cdots ,a_{i_{\ttk-1}}\}$   and  (2)  $\tw_{\ttk}$ is $\gamma$-far from $\span(\tw_1,...,\tw_{\ttk-1})$.

Step $\ttk=1$ follows immediately.
For the inductive step  $\ttk>1$: if we create  $\tw_{\ttk}$ for a problem $i=i_{\ttk}$,
 this only
happens if there is no vector  in the span of the previous metafeatures
$\tw_j$, $j<i$ that has error less than $\epsilon$ for problem $i_{\ttk}$.\footnote{Technically, since we are learning over a finite sample, we can only be confident that there is no vector in the span of error at most $\epsilon/2$.  However, we can absorb these factors of 2 into the constants $c,c'$.}
That is $a_{i_{\ttk}}$ is at least $\epsilon/c'$-far from the   $\span \{\tw_1,...,\tw_{\ttk-1}\}$ for some absolute constant $c'$ (by Lemma \ref{l:angle}). We also have $\theta(\tw_{\ttk},a_{i_{\ttk}}) \leq \eps_{acc}/c$, therefore,  by triangle-inequality, we obtain 
$$\theta(\tw_{\ttk}, \span(\tw_1,...\tw_{\ttk-1})) \geq \epsilon/c' - \eps_{acc}/c \geq \gamma.$$
Thus $\tw_{\ttk}$ is $\gamma$-far from $\span\{\tw_1,...,\tw_{\ttk-1}\}$.
It remains to show that $a_{i_{\ttk}}$ is $\gamma$-far from the
span of $\{a_{i_1}, \cdots ,a_{i_{\ttk-1}}\}$.
Suppose for contradiction 
that $\anglesep(a_{i_{\ttk}},
 \{a_{i_1}, \cdots ,a_{i_{\ttk-1}}\}) \leq \gamma$.
We will show that this implies there exists $\tb_{i_{\ttk}} \in \span\{\tw_{1}, \cdots , \tw_{\ttk-1} \}$ with  error at most $\epsilon$, contradicting the fact that
no such vector exists.

 By construction we have $\theta(a_{i_j}, \tw_j) \leq \epsilon_{acc}/c$ for $j\in \{1, \ldots, \ttk-1\}$; also by induction we have
 $\tw_{{j}}$ is $\gamma$-far from the
span of $\{\tw_{1}, \cdots ,\tw_{{j-1}}\}$ for $j \in \{1, \ldots, \ttk-1\}$. By Lemma~\ref{lem:subspace} we obtain that
$$\theta(\span\{a_{i_1}, \cdots ,a_{i_{\ttk-1}}\},\span\{\tw_{1}, \cdots ,\tw_{\ttk-1}\}) \leq 2k{\eps_{acc}}/{(c\gamma)}.$$
These together with triangle inequality imply that
$$\theta( a_{i_{\ttk}},\span\{\tw_{1}, \cdots ,\tw_{{\ttk-1}}\}) \leq \gamma + 2k\frac{\eps_{acc}}{c\gamma}  \leq \epsilon/c'.$$
So by Lemma \ref{l:angle} there exist  $\tb_{i_{\ttk}} \in \span\{\tw_{1}, \cdots ,\tw_{{\ttk-1}}\}$ of  error at most $\epsilon$, which contradicts our assumption.  Therefore, our induction is maintained (by condition (2)) and so we have $\tk \leq k$ (by condition (1) and our assumption on the $\gamma$-effective dimension of the  targets).
\end{proof}

By setting $\gamma=O(\epsilon)$ and $\epsilon_{acc} = O(\epsilon^2/k)$ the total   number of labeled examples needed to learn all the problems to error $\epsilon$ is $\tO(nk^2/\epsilon^2+ k m /\epsilon)$, which could be significantly lower than learning each problem separately.
In this case the sample complexity would be $\Omega(mn/\epsilon)$ even under log-concave distributions~\cite{BalcanLong:13}.

\begin{note}
As stated, Algorithm \ref{algo-one} is not efficient because it requires finding an optimal linear separator in Step 2, which in general is hard.  However, for log-concave distributions, there exist algorithms running in time poly$(k,1/\epsilon)$ that find a near-optimal linear separator: in particular, one of error $\epsilon$ under the assumption that the optimal separator has error $\eta = \epsilon/\log^2(1/\epsilon)$ \cite{ABL14}, and with near-optimal sample complexity \cite{H13b,Y13}.  Thus, by reducing $\eps_{acc}$ by an $O(\log^2(1/\epsilon))$ factor, one can achieve the bounds of Theorem 1 efficiently.
\end{note}

\subsection{Halfspaces with more complex common structure}
\label{multiple-levels-linear}
In this section we consider life-long learning of halfspaces with more complex common structure, corresponding to a multi-layer network of linear metafeatures.  
It is at first not obvious how multiple levels of linear nodes could help: if the target vectors span a $k$-dimensional subspace, then to represent them with a multi-layer linear network, each layer would need to have at least $k$ nodes.
However, the numbers of nodes in the network do not tell the whole story: sample complexity of learning can also be reduced via sparsity.

Specifically, we assume now that the target functions all
 lie in a $k$ dimensional space and that furthermore within that  $k$-dimensional space,  each target
 lies in one of $r$ different $\smdim$-dimensional spaces. This naturally models settings where there are really $r$ different types
 of learning problems but  they share some commonality across type (given by the common $k$-dimensional subspace).\footnote{For instance, imagine a job-placement company whose goal is to
decide which people would do well in which job.  In this setting, we can measure a large
number of features for each person (e.g., based on how well they do on various tests).  There are then $k$ ``intrinsic qualities'' that are linear
combinations of these features.  E.g., ``quantitative reasoning'' might
be one linear combination, ``people skills'' and ``time management'' might
be others, etc., and really what is important about each person is
where they sit in this $k$-dimensional subspace.  Then,
different jobs might belong in different low-dimensional spaces within
this k-dimensional space, based on what is important for that job.
I.e., there are $r$ ``kinds'' of jobs, each of which has a
$\tau$-dimensional subspace that is relevant for it.
}
We can view this as a network with two hidden layers: the first layer given by vectors $w_1,w_2 ,\ldots, w_k$, and the second  layer given by $r$  $\smdim$-tuples
of vectors, $u_1^1, \ldots, u_1^{\smdim}$, ..., $u_r^{1}, \ldots, u_r^{\smdim}$, where  $u_i^1, \ldots, u_i^{\smdim}$  span one of $\smdim$-dimensional spaces. 
In other words, the first hidden layer captures the overall low dimensionality and the second hidden layer captures sparsity. We assume $r \ll m$ and  $k \ll n$ and that $\smdim$ is a constant.

Algorithmically, given a new problem we first try to learn  well via a
sparse linear combination of only $\smdim$  second level metafeatures 
If we fail, we try to learn based
on the  first  level metafeatures  and if successful we add a new second level metafeature corresponding to this target.
If that fails, we  learn using the input features and then we add both a first and second level metafeature corresponding to this target.
For log-concave distributions, by using the subspace lemma and an error analysis similar to that for Theorem~\ref{thm:linmetalogconc} we can show we have $\tk \leq k$ and $\tr \leq \smdim r$.
Formally:


\begin{theorem}  Assume  all marginals $D_i$ are isotropic log-concave and the target functions satisfy the above conditions. Consider $\tgamma \leq c\epsilon$, 
 $\uepsacc \leq c\frac{\tgamma \epsilon}{\smdim}$, $\gamma \leq c\uepsacc$,
and $\wepsacc \leq c\frac{\gamma\uepsacc}{k}$
for (sufficiently small) constant $c>0$.
Consider running 
Algorithm~\ref{algo-two} (see appendix)
with parameters $\epsilon$, $\wepsacc$, and $\uepsacc$.
Then $\tk \leq k$ and $\tr \leq \smdim r$.
Moreover the total   number of examples needed to learn all the problems to error $\epsilon$
is $\tO(nk/{\wepsacc} + kr/{\uepsacc} + m \log(r)/\epsilon)$.
\label{thm:linmetalogconc2levels}
\end{theorem}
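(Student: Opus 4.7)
The plan is to mirror the two-level structure of Algorithm~\ref{algo-two} by maintaining two inductive invariants analogous to those used in Theorem~\ref{thm:linmetalogconc}, one per level, and then to apply Lemma~\ref{lem:subspace} at each level to control error propagation. Classify the problems $i=1,\ldots,m$ into three types: those handled by a $\smdim$-sparse combination of existing second-level metafeatures (cost $\tilde O(\log(\tr)/\epsilon)$ samples each via a sparse/$L_1$-margin learner over $\tr$ features), those that require learning a new second-level metafeature in the first-level representation (cost $\tilde O(\tk/\uepsacc)$ per occurrence), and those that require learning a new first-level metafeature from the raw input features (cost $\tilde O(n/\wepsacc)$ per occurrence). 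Combined with the bounds $\tk\leq k$ and $\tr\leq \smdim r$ proved below, this immediately yields the claimed sample complexity $\tilde O(nk/\wepsacc+kr/\uepsacc+m\log(r)/\epsilon)$.

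For the count of first-level metafeatures, I maintain by induction the invariants that whenever a $\tw_{\ttk}$ is created for problem $i=i_{\ttk}$: (i-a) $a_{i_{\ttk}}$ is $\gamma$-far from $\span\{a_{i_1},\ldots,a_{i_{\ttk-1}}\}$, and (i-b) $\tw_{\ttk}$ is $\gamma$-far from $\span\{\tw_1,\ldots,\tw_{\ttk-1}\}$. The argument is exactly that of Theorem~\ref{thm:linmetalogconc}: falling through to the raw-feature branch means no vector in $\span\{\tw_1,\ldots,\tw_{\ttk-1}\}$ has error $\epsilon$, hence by Lemma~\ref{l:angle} the true target is $\Omega(\epsilon)$-far from that span; combining with $\theta(\tw_j,a_{i_j})=O(\wepsacc)$ (Lemma~\ref{l:angle}) and applying Lemma~\ref{lem:subspace} with parameters $(\wepsacc,\gamma,k)$ transfers this separation to the true-target side, giving (i-a); (i-b) follows from triangle inequality, which is valid because the parameter setting $\wepsacc\le c\gamma\uepsacc/k$ in particular ensures $\wepsacc\le \gamma^2/(10k)$ so the hypotheses of Lemma~\ref{lem:subspace} are met. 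Invariant (i-a) together with the $\gamma$-effective-dimension assumption yields $\tk\leq k$.

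For the count of second-level metafeatures, the key new idea is to apply Lemma~\ref{lem:subspace} separately within each of the $r$ unknown $\smdim$-dimensional subspaces $S_1,\ldots,S_r$. Group the second-level metafeatures $\tu_\ell$ by the subspace $S_{j(\ell)}$ containing their underlying target $a_{i_\ell}$. I claim that within any fixed $S_j$, the algorithm adds at most $\smdim$ second-level metafeatures, maintaining by induction that each newly added $\tu_\ell$ for $S_j$ has its true target $\tgamma$-far from the span of the previous true second-level targets in $S_j$, and that $\tu_\ell$ itself is $\tgamma$-far from the span of the previous $\tu$'s in $S_j$. Adding a new second-level metafeature means no $\smdim$-sparse combination of existing $\tu$'s approximates $a_{i_\ell}$ to error $\epsilon$; in particular the specific $\smdim$-subset of $\tu$'s lying in $S_j$ fails, so $a_{i_\ell}$ is $\Omega(\epsilon)$-far from their span, and Lemma~\ref{lem:subspace} applied inside $S_j$ with parameters $(\uepsacc,\tgamma,\smdim)$ — whose hypothesis is guaranteed by $\uepsacc\le c\tgamma\epsilon/\smdim$ — converts this into $\tgamma$-separation at the true-target level. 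Since $\dim S_j=\smdim$, no more than $\smdim$ successive $\tgamma$-separated targets can lie in $S_j$, so at most $\smdim$ second-level metafeatures are attached to each $S_j$, giving $\tr\leq \smdim r$.

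The main obstacle is making the two-level error cascade close. The parameter chain $\wepsacc\ll \gamma\uepsacc/k \ll \uepsacc\ll \tgamma\epsilon/\smdim\ll \epsilon^2/\smdim$ is forced by the requirement that the angle-distortion introduced by Lemma~\ref{lem:subspace} at the first level (of order $k\wepsacc/\gamma$) stay smaller than the accuracy $\uepsacc$ at which second-level learning is done over the $\tw$-representation, while the angle-distortion at the second level (of order $\smdim\uepsacc/\tgamma$) must in turn stay smaller than the final target accuracy $\epsilon$. Once this cascade is set as in the hypothesis, the inductive step is a triangle-inequality computation essentially identical to the one in the proof of Theorem~\ref{thm:linmetalogconc}, applied once per level.
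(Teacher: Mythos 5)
Your proposal is correct and follows essentially the same route as the paper: the same three-way classification of problems and the same per-level use of Lemma~\ref{l:angle} and Lemma~\ref{lem:subspace} with the identical parameter cascade, the only cosmetic difference being that you cap the number of second-level metafeatures per $\smdim$-dimensional subspace by counting $\tgamma$-separated true targets, whereas the paper argues that once $\smdim$ such metafeatures exist their span $\epsilon$-approximates that subspace so no further ones are added. One small slip: the raw-feature branch of Algorithm~\ref{algo-two} is triggered by failure at accuracy $\uepsacc$ (not $\epsilon$), so the first-level separation you actually get is only $\Omega(\uepsacc)$ --- which is all you need since $\gamma \leq c\uepsacc$, and your parameter-cascade paragraph already reflects this correctly.
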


(Proof in appendix).
By setting $\tgamma=\epsilon/2$, $\uepsacc=\Theta(\epsilon^2/\tau)$, $\gamma=\Theta(\epsilon^2/\tau)$, $\wepsacc=\Theta(\epsilon^4/\tau^2 k)$ we get that the total   number of labeled examples needed to learn all the problems to error $\epsilon$ is
 $\tO(nk^2 \tau^2/{\epsilon^4} + kr \tau^2/{\epsilon^2} + m \tau \log(r)/\epsilon)$.
This could be significantly lower than learning each problem separately or by learning the problems together but only using one layer of metafeatures.
Specifically, if we used one layer of  metafeatures as in  Theorem~\ref{thm:linmetalogconc} (corresponding to the $k$-dimensional subspace) the sample complexity would be  $O(nk^2/\epsilon^2 + mk/\epsilon)$. Alternatively we could have just one middle layer of size $r \tau$ and learn sparsely within that, but this would also give worse bounds if $r$ is large.
As a concrete example,  if $\epsilon$ is constant, $k = \sqrt{n}$, $r = n^2$ and $m = n^{2.5}$, we get that
the two-layer algorithm requires only 
$O(\tau^2/\epsilon^2+\tau \log(r)/\epsilon)$ examples per target. On the other hand, the other two options require
   at least $O(k/\epsilon)$ examples per target, which could be much worse.

\section{Life-long Learning of Monomials}
\label{se:monomial}
We now consider a nonlinear case where the metafeatures will be products and combined via products.  Specifically, we assume that the instance space is $X=\{0,1\}^n$, that the $m$ target functions are conjunctions (i.e., products) of features, and that there exist $k$  monomial metafeatures such that all the target functions can be expressed as conjunctions (products) over them.  Our goal will be to learn them efficiently.

If the  metafeatures do not overlap, then this can be viewed as an instance of the linear case.
Each target function   can be described by an indicator vector with coefficients in $\{0,1\}$ (plus a threshold that can be converted to an integer weight for a dummy variable $x_0$). More importantly, if the metafeatures do not overlap, then the indicator vectors for all the {\em targets} are in a space of rank $k$ with basis given by the indicator vectors of the metafeatures.
If furthermore the underlying distribution is one for which, when learning from scratch, we can learn the target functions exactly (e.g., a product distribution where each variable is set to 0 some non-negligible fraction of the time) then we can directly apply the analysis for linear case.  In fact, the overall analysis is much simpler since we have the targets exactly that were learned from scratch. 

So, the interesting case is when metafeatures may overlap (it is easy to construct examples where this produces a space of dimension $\Theta(2^k)$).  Unfortunately, without any additional assumptions, even just the consistency problem is now NP-hard.  That is, 
given a collection of conjunctions, it is NP-hard to determine whether there exist $k$ monomials such that each can be written as a product of subsets of those monomials (it is called the ``set-basis problem'' \cite{GJ79}).  For this reason, we will make a natural {\em anchor-variable} assumption that each metafeature $m_i$ 
has at least one variable (call it $y_i$) that is not in
any other metafeature $m_j$.  So this is a generalization of the disjoint case
where {\em every} variable in $m_i$ is not
inside any other $m_j$.  We can think of $y_i$ as an ``anchor variable'' for metafeature $m_i$.

We now show how with this assumption we can efficiently solve the consistency problem (and  {\em find} the smallest set of monomials for which one can reconstruct each target).  Using this as a subroutine, we then show how to solve an abstract online learning problem where at each stage we must propose a set of at most $k$ monomial metafeatures and then pay a cost of 1 if the next target cannot be written as a product over them.  This can then be applied to give efficient life-long learning of related conjunctions over product distributions.
In Section \ref{sec:sparseboolean} we give an application to constructing Boolean superimposition-based autoencoders.  We then relax the anchor-variable assumption and show how under this relaxed condition we can solve for near-optimal {\em sparse} autoencoders as well as life-long learning of conjunctions under relaxed conditions. In 
Section \ref{se:poly}, we build on some of these results to give an algorithm for life-long learning of polynomials.

\subsection{Solving the Consistency Problem}

We now show that we can use Algorithm~\ref{algo-consistency-anchor} (below) for solving the consistency problem under the anchor variable assumption.  That is, given a collection of conjunctions, the goal is to find the fewest monomial metafeatures needed to reconstruct all of them as products of subsets of the metafeatures.  
Given a conjunction $T$ we denote by $\vars(T)$ the variables appearing in $T$.
Given a variable $z$ and a set of conjunctions $\TS$ we denote by $N(\TS,z)$ the set of conjunctions in $\TS$ that contain $z$.

\setcounter{algorithm}{2} 
\begin{algorithm}
{\bf Input}: set $\TS = \{T_1,  \ldots, T_{\labell} \}$ of conjunctions.
\begin{enumerate}
\vspace*{-0.1in}\addtolength{\itemsep}{-0.1in}
\item Let $i=0$.
\item Let $h(T)$ denote the conjunction of all metafeatures $\thh_j$ produced so far that are fully contained in $T$. I.e., $\vars(h(T)) =\cup \{ \vars(\thh_j): \vars(\thh_j) \subseteq \vars(T) \}$.
\item While there exists $T \in \TS$ s.t. $\vars(T) \neq \vars(h(T))$ do:
\begin{itemize}
\item[(1)]  Let $T$ be the target of least index in $\TS$ s.t. $\vars(T) \neq \vars(h(T))$.
\item[(2)] Choose  $z_{i+1}$ to be a minimal variable in $\vars(T) \setminus \vars(h(T))$; that is, there is no other variable $z' \in \vars(T) \setminus \vars(h(T))$ s.t. $N(\TS,z') \subset N(\TS,z)$.
    If there are multiple options, choose $z_{i+1}$ to be the option of least index.
\item[(3)] Let $\vars(\thh_{i+1})$ be the intersection of $\vars(T)$ for all $T$ in $\TS$ that contain $z_{i+1}$. 
That is $\vars(\thh_{i+1})=\bigcap\limits_{T \in \TS, z_{i+1} \in \vars(T)} \vars(T).$
 \item [(4)] i=i+1
  \end{itemize}
\end{enumerate}
{\bf Output}: Conjunctions $\thh_1, \ldots, \thh_i$ s.t. each $T_j$ is a conjunction of a subset of them.
\caption{Consistency problem for monomial metafeatures with anchor variables \label{algo-consistency-anchor}}
\end{algorithm}

\begin{lemma}\label{lem:conjunctionconsistency}
Let $\TS$ be a set of conjunctions such that each of them is a conjunction of some subset of metafeatures $m_1, \ldots, m_k$ satisfying the anchor variable condition.
We can use Algorithm~\ref{algo-consistency-anchor} to find $\thh_1, \ldots, \thh_i$, $i \leq k$  s.t. each $T_j \in \TS$ is a conjunction of a subset of them. Moreover each $\thh_i$ is associated to a metafeature $m_{t_i}$ s.t. the following conditions are satisfied:
\begin{enumerate}\vspace*{-0.1in}\setlength{\itemsep}{2pt}\setlength{\parsep}{0pt}\setlength{\parskip}{0pt}
\item [(a)] 
$\vars(m_{t_i})  \subseteq vars(\thh_i)$; that is,  $\thh_ i$ is more specific than $m_{t_i}$.
\item [(b)] For all targets $T$ in $\TS$ such that $\vars(m_{t_i}) \subseteq \vars(T)$ we have $\vars(\thh_i)\subseteq vars(T)$; that is,
$\thh_i$ is not too specific.
\item[(c)] For any $j$, if $y_j \in \vars(\thh_i)$  then $\vars(m_j) \subseteq \vars(\thh_i$).
\end{enumerate}
\end{lemma}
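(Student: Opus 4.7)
The plan is induction on the iteration count $l$, maintaining three invariants simultaneously: (i) properties (a)–(c) hold for each produced $\thh_l$ with some chosen index $t_l$; (ii) $z_l\in\vars(m_{t_l})$ and $N(\TS,y_{t_l})=N(\TS,z_l)$; (iii) $t_1,\ldots,t_l$ are pairwise distinct. Since every $t_l$ lies in $\{1,\ldots,k\}$, invariant (iii) will immediately deliver the bound $i\leq k$, and the termination condition of the algorithm will give that each $T_j\in\TS$ is a conjunction of the produced $\thh$'s.

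In the inductive step the real work is selecting $t_{l+1}$. Because $T$ is a conjunction of metafeatures and $z_{l+1}\in\vars(T)$, some $m_j$ with $z_{l+1}\in\vars(m_j)$ must be present in $T$, giving $y_j\in\vars(T)$. I argue $y_j\notin\vars(h(T))$: if $y_j\in\vars(h(T))$, then $y_j\in\vars(\thh_r)$ for some $r\leq l$ with $\vars(\thh_r)\subseteq\vars(T)$; property (c) at step $r$ then forces $\vars(m_j)\subseteq\vars(\thh_r)$ and hence $z_{l+1}\in\vars(h(T))$, contradicting the choice of $z_{l+1}$. So $y_j$ is a legal competitor under the minimality rule, and since $N(\TS,y_j)\subseteq N(\TS,z_{l+1})$ (the anchor $y_j$ lives only inside $m_j$, every target of which contains $z_{l+1}$), minimality upgrades the inclusion to equality. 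To keep $t_{l+1}$ fresh, I exclude $j\in\{t_1,\ldots,t_l\}$ by a symmetric argument: if $j=t_r$, then $y_j=y_{t_r}\in\vars(T)$ implies $\vars(m_{t_r})\subseteq\vars(T)$, so property (b) at step $r$ gives $\vars(\thh_r)\subseteq\vars(T)$; hence $\thh_r$ is folded into $h(T)$ and $z_{l+1}\in\vars(m_{t_r})\subseteq\vars(\thh_r)\subseteq\vars(h(T))$, again contradicting the choice of $z_{l+1}$.

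Properties (a)–(c) for the new $\thh_{l+1}$ then unwind cleanly from the identity $\{T':z_{l+1}\in\vars(T')\}=N(\TS,z_{l+1})=N(\TS,y_{t_{l+1}})$: the targets intersected to form $\vars(\thh_{l+1})$ are exactly those containing $m_{t_{l+1}}$, which directly yields (a) and (b), and (c) follows because any $y_j\in\vars(\thh_{l+1})$ witnesses $\vars(m_j)\subseteq\vars(T')$ for every $T'$ in the intersection, hence $\vars(m_j)\subseteq\vars(\thh_{l+1})$.

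Termination is routine: each iteration adds $z_{l+1}$ to $\vars(h(T))$ for the processed target, so the monovariant $\sum_{T\in\TS}|\vars(T)\setminus\vars(h(T))|$ strictly decreases, and at termination the stopping condition says each $T$ is a conjunction of the chosen $\thh$'s. The main obstacle is threading the three invariants through each other: property (c) at earlier iterations is precisely what certifies the candidate anchor as a legal competitor under the minimality rule, and properties (a)+(b) at earlier iterations are precisely what force $t_{l+1}$ to be fresh. Any weakening of (a)–(c) would break the induction, which is why all three are packaged into a single claim rather than proved separately.
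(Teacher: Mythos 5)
Your proof is correct and follows essentially the same route as the paper's: induction maintaining (a)--(c), with earlier instances of (c) certifying the true anchor $y_{t_{l+1}}$ as a legal competitor under the minimality rule, earlier (a)+(b) forcing freshness of $t_{l+1}$, and minimality ruling out any intersected target that misses $m_{t_{l+1}}$. The only cosmetic difference is that you first establish the set identity $N(\TS,y_{t_{l+1}})=N(\TS,z_{l+1})$ and then read off (a)--(c), whereas the paper proves (b),(c) directly and obtains (a) by the equivalent contradiction argument; your explicit treatment of termination and the bound $i\le k$ is a welcome addition the paper leaves implicit.
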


\begin{proof}
Note that for any $i$  for any $T \in \TS$ we have $\vars(h(T)) =\cup \{ \vars(\thh_j): j\leq i, \vars(\thh_j) \subseteq \vars(T) \}$; that is, $\vars(h(T))$ represents all variables  from $T$ that are already used by the previous
hypothesized metafeatures $\thh_j$ whose relevant variables are contained in $T$.

We prove the desired statement by induction.
Assume inductively that $\thh_1, \ldots, \thh_i$ satisfy conditions (a),(b),(c).
We show that  $\thh_{i+1}$ satisfies these conditions as well.

Consider the target $T$ we choose in step 3(1) in round $i+1$. We know $z_{i+1} \in \vars(T) \setminus \vars(h(T))$ and
that $T$ is a conjunction of the true metafeatures. So $z_{i+1}$ belongs to some metafeature $m_{t_{i+1}}$ s.t.
 $\vars(m_{t_{i+1}}) \subseteq \vars(T)$ .
 From the induction hypothesis, by conditions $(a)$,$(b)$ we know that  $m_{t_{i+1}} \neq m_{t_{i'}}$ for $i' \leq i$.
To see this assume by contradiction that $m_{t_{i+1}} = m_{t_{i'}}$ for $i' \leq i$; so $z_{i+1} \in m_{t_{i'}}$.
By condition $(a)$ we know  $\vars(m_{t_{i'}}) \subseteq \vars(\thh_{t_{i'}})$ and since $\vars(m_{t_{i'}}) \subseteq \vars(T)$ by condition $(b)$
we have $\vars(\thh_{t_{i'}}) \subseteq \vars(T)$, so $z_{i+1} \in \vars(h(T))$, contradiction.

Consider $T \in \TS$ such that  $\vars(m_{t_{i+1}}) \subseteq \vars(T)$. Since   $z_{i+1} \in \vars(m_{t_{i+1}})$ and
we create $\thh_{{i+1}}$ by intersecting the variables in every target $T$ containing $z_{i+1}$, we clearly have $\vars(\thh_{i+1}) \subseteq \vars(T)$, satisfying condition (b).
Also if any target $T$ contains an anchor variable $y_j$, then it must contain $m_j$, so condition (c) is satisfied as well.

We now show that (a) is satisfied, namely that $ \vars(m_{t_{i+1}})  \subseteq \vars(\thh_{i+1})$.
This could only fail if $z_{i+1}$ is not an anchor for $m_{t_{i+1}}$, so in step $2$ of the algorithm we intersected some target $T$ that contains $z_{i+1}$ but does not contain $m_{t_{i+1}}$. This can only happen if $z_{i+1}$ also belongs to some other $m_j$.
But then $z_{i+1}$ is not minimal since $y_{t_{i+1}}$ (the true anchor variable for $m_{t_{i+1}}$, which is also contained in $\vars(T) \setminus \vars(h(T))$ by (c))
 satisfies $N(\TS,y_{t_{i+1}}) \subset N(\TS,z_{i+1})$, and so would have been chosen instead of $z_{i+1}$ in step $3(1)$.
\end{proof}

\subsection{An Abstract Online Problem}
Building on Algorithm \ref{algo-consistency-anchor} and Lemma \ref{lem:conjunctionconsistency}, we now describe an algorithm for the following abstract online setting.  At each time-step $\labell$ we propose a set $\tM$ of at most $k$ hypothesized metafeatures and are provided with a target conjunction $T_\labell$.  If $T_\labell$ can be written as a conjunction of metafeatures in $\tM$ then we pay 0.  If not, then we pay 1 and may update our set $\tM$ using $T_\labell$ (this corresponds to the case of learning $T_\labell$ from scratch).  Our goal is to bound our total cost, under the assumption that there {\em exists} a set of $k$ metafeatures for all targets.  To do so we need to argue that each time we pay 1, 
we can use $T_\labell$ to make progress.

\begin{algorithm}
{\bf Input}: Targets $T_1, T_2, \ldots, T_m$ provided online.
\begin{enumerate}\vspace*{-0.1in}\setlength{\itemsep}{2pt}\setlength{\parsep}{0pt}\setlength{\parskip}{0pt}
\item Initialize $\TS = \emptyset$ and $\tM = \emptyset$.
\item For $\labell=1 $ to $m$ do:
\begin{itemize}
\item If we cannot represent $T_{\labell}$ as conjunction of hypothesized metafeatures $\tM$ then
\begin{enumerate}
   \item[$\bullet$] Add $T_{\labell}$ to $\TS$.
   \item[$\bullet$] Run Algorithm~\ref{algo-consistency-anchor} with input $\TS$ to produce hypothesized metafeatures $\tM$.
   \end{enumerate}
  \end{itemize}
\end{enumerate}
\vspace*{-0.1in}
{\bf Output}: Hypothesized metafeatures $\tM$.
\caption{Lifelong Learning of Conjunctions with Monomial Metafeatures  \label{algo-monomial-anchor-online}}
\end{algorithm}

\begin{theorem}
\label{thm-online-anchor}
 The number of targets that need to be learned from scratch in in Algorithm~\ref{algo-monomial-anchor-online} is at most $n^2 +k$.
\end{theorem}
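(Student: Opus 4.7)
I would bound the number of failures via a potential function $\Phi$ that starts at $0$, strictly increases by at least $1$ at each failure, and is bounded above by $n^2+k$. Specifically, define
\[
\Phi(\tM) \ :=\ (n+1)\,|\tM|\ -\ \sum_{\thh\in\tM} |\vars(\thh)|.
\]
Lemma~\ref{lem:conjunctionconsistency} guarantees $|\tM|\le k$ after every execution of Algorithm~\ref{algo-consistency-anchor}, and the anchor-variable assumption forces each of the $k$ true metafeatures to contain a distinct anchor variable among the $n$ total variables, so $k\le n$. Combined with $|\vars(\thh)|\ge 0$, these yield the uniform upper bound $\Phi \le (n+1)k = nk+k \le n^2+k$. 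Since $\Phi=0$ initially, once I show $\Phi$ jumps by at least $1$ at each failure the theorem is immediate.

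For the strict-increase step, the intuition is that a failure either (a)~creates a new hypothesized metafeature (contributing $+(n+1)$ to the first term while adding at most $n$ to the second, a net gain of at least $1$), or (b)~keeps $|\tM|$ fixed but strictly shrinks the total variable count $\sum |\vars(\thh)|$. The cleanest formulation uses the injective map $\thh\mapsto m_{t(\thh)}$ from Lemma~\ref{lem:conjunctionconsistency}: for each true metafeature $m_s$, let $c_s(\TS) := |\vars(\thh)|$ if some $\thh\in\tM(\TS)$ is associated with $m_s$, and $c_s(\TS) := n+1$ otherwise. Then $\Phi = \sum_{s=1}^{k} \bigl(n+1 - c_s(\TS)\bigr)$, and it suffices to show each $c_s$ is monotone non-increasing in $\TS$ and that at least one $c_s$ strictly decreases at each failure.

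The main obstacle is justifying this monotonicity, since Algorithm~\ref{algo-consistency-anchor} reruns from scratch on the enlarged $\TS$ and may shuffle which $\thh$ is associated with which $m_s$. I would exploit two structural facts about the algorithm: (i)~for every fixed variable $z$, the intersection $\bigcap_{T\in\TS,\,z\in\vars(T)} \vars(T)$ computed in Step~3(3) can only shrink as $\TS$ grows; and (ii)~the greedy minimal-variable rule in Step~3(2) progressively zeros in on the true anchor variables as more targets reveal structure. A careful case analysis on which $z$ generates the $\thh$ associated with $m_s$ in the new versus old run should give $c_s(\TS\cup\{T_\labell\}) \le c_s(\TS)$, with a strict decrease for at least one $s$ forced by the fact that $T_\labell$ was not representable by $\tM^{\mathrm{old}}$ (so some intersection relevant to the covering of $T_\labell$ must genuinely refine). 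Combined with the upper bound $\Phi \le n^2 + k$, this yields the bound on the number of failures.
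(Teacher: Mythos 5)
There is a genuine gap, and it sits exactly where you flag it: the monotonicity of the $c_s$'s is not just unproven, it is false for the association map you invoke. Take true metafeatures $m_1=x_1$, $m_2=x_2$, $m_3=x_3$ (disjoint singletons, so the anchor condition is trivial). With $\TS=\{T_1\}$, $T_1=x_2x_3$, Algorithm~\ref{algo-consistency-anchor} picks the least-index minimal variable $x_2$ and outputs $\thh_1=x_2x_3$, whose Lemma~\ref{lem:conjunctionconsistency} associate is $m_2$, so $c_2=2$. Now the target $T_2=x_1x_2$ arrives and is not representable, so we rerun on $\{T_1,T_2\}$: when processing $T_1$ we now have $N(\TS,x_3)=\{T_1\}\subsetneq N(\TS,x_2)=\{T_1,T_2\}$, so the algorithm picks $x_3$ and produces $\thh'_1=x_2x_3$ associated with $m_3$; processing $T_2$ it picks $x_1$ and produces $\thh'_2=x_1x_2$ associated with $m_1$. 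No hypothesized metafeature is associated with $m_2$ in the new run, so $c_2$ jumps from $2$ to $n+1$: a true metafeature can simply lose its associate when the run is redone, because the greedy choices (and hence the generator-based association) get reshuffled. In this example the aggregate $\Phi$ still happens to increase, but your route to proving the increase collapses. Moreover, even the aggregate claim is not established by your case (a)/(b) dichotomy: a failure that adds a metafeature need not leave the other metafeatures unchanged (the new run's metafeatures are intersections taken over a different set of generators, so they are not comparable one-to-one with the old ones), and you never rule out that $|\tM|$ drops, in which case $\Phi$ would fall by roughly $n+1$. Since $\Phi$ is a function of the algorithm's \emph{output} $\tM(\TS)$, the entire difficulty of the theorem --- showing that re-running from scratch on a larger $\TS$ makes irreversible, quantifiable progress --- is concentrated in the step you leave as ``a careful case analysis should give.''

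The paper avoids this trap by measuring progress with a quantity that depends only on $\TS$, not on the algorithm's output: the directed graph on variables with an edge $(x_i,x_j)$ whenever every target in $\TS$ containing $x_i$ also contains $x_j$. This graph can only lose edges as $\TS$ grows, so monotonicity is automatic; the work then goes into a case analysis of the \emph{first} step at which the rerun of Algorithm~\ref{algo-consistency-anchor} diverges from the previous run, showing that each failure either cuts at least one of the at most $n^2$ edges or reproduces the old run verbatim and appends a new metafeature (which, by Lemma~\ref{lem:conjunctionconsistency}, can happen at most $k$ times). If you want to salvage a potential-function argument in your style, you would need either to prove the missing structural facts about how the greedy output evolves (in particular that the first-divergence analysis forces your potential up in cases where the runs diverge before the new target is reached), or to switch to a target-set-based potential such as the paper's edge count plus metafeature count --- at which point you have essentially reconstructed the paper's proof.
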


\begin{proof}
For any given  set of targets $\TS$  learnt from scratch, we define a directed graph $G_{\TS}$  on the variables, by adding an edge $(x_i,x_j)$ if 
every target in $\TS$ that has $x_i$ also has $x_j$. Note that if
$\TS \subseteq \TTS $ we have  $E(G_{\TS}) \subseteq E(G_{\TTS})$.
We start with the complete directed graph (corresponding to $\TS=\emptyset$), and then we argue that each time we are forced to learn a new target from scratch and increase $\TS$ we either delete at least one edge from the graph or we increment the number of hypothesized metafeatures by $1$.

Suppose the new target $T_{\labell}$ cannot be represented using the current hypothesis metafeatures.  So we add $T_{\labell}$ into $TS$ and re-run Algorithm~\ref{algo-consistency-anchor} .  Let us look at the first time the new run differs from the old run. There are three possibilities for this difference.

(1) It could be that we choose a different $z_{i+1}$ in step $3(2)$ of  Algorithm~\ref{algo-consistency-anchor}.  There are two ways this can happen: (a) the old $z_{i+1}$ is not minimal any more or $(b)$ it could be some $z'$ (of lower index than the old $z_{i+1}$) was not minimal before but is minimal now.
 In case (a) we have some $z'$ is now in a strict subset of the targets in $\TS$ that contain $z_{i+1}$ but this was not the case before adding $T_{\labell}$.
This means the new target $T_{\labell}$ must contain the old $z_{i+1}$ but not $z'$, and all previous targets that contained either $z'$ or $z_{i+1}$ contained both of them. 
That means we cut the edge $(z_{i+1},z')$.
In case $(b)$, some $z'$ (of lower index than the old $z_{i+1}$) was not minimal before but is minimal now.
This means that before there was some $z''$ that was in a strict subset of the targets as $z'$, but it is not anymore.
Now, $z'$ is minimal, $z''$ is no longer in a strict subset of the targets containing $z'$; so the new target contains $z''$ but not $z'$. So we  cut the edge $(z'',z')$.

(2) It could be that we get the same $z_{i+1}$ but different $\thh_{i+1}$ in step $3(3)$; this means  $\vars(\thh_{i+1})$ is smaller.
Thus we cut the edges between $z_{i+1}$ and all the variables in the old $\thh_{i+1}$ that are not in the new $\thh_{i+1}$.

(3) It could be that we use the new target $T_{\labell}$ in step $3(1)$. Since we go through the targets in order,  the only way that the first difference can be when the new target is used in 3(1) is if every previous metafeature is created the same as before. Therefore, in this case we create a new metafeature. So, the number of metafeatures is increasing and we make progress as desired.
\end{proof}

\subsection{Applications}
As one immediate application of the above abstract online problem, since conjunctions over $\{0,1\}^n$ can be exactly learned in the Equivalence Query model with at most $n$ equivalence queries (and conjunctions over $\{0,1\}^k$ can be learned from at most $k$ equivalence queries), we immediately have the following:
\begin{corollary}\label{cor:EQ}
Let $\TS$ be a sequence of $m$ conjunctions such that each is a conjunction of some subset of metafeatures $m_1, \ldots, m_k$ satisfying the anchor variable condition.  Then this sequence can be learned using only $O(mk + n^3)$ equivalence queries total.
\end{corollary}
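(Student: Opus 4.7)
The plan is to execute Algorithm~\ref{algo-monomial-anchor-online} and on each round implement one of two standard equivalence-query subroutines for learning conjunctions, then sum up the costs using Theorem~\ref{thm-online-anchor}. Recall that at each round the pool $\tM$ of hypothesized metafeatures has size at most $k$ (by Lemma~\ref{lem:conjunctionconsistency}), so the per-round bounds are controlled.

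For each incoming target $T_\labell$ I would first attempt to learn it as a conjunction over $\tM$. Concretely, initialize the hypothesis to the most-specific candidate $H = \bigwedge_{\tilde m \in \tM} \tilde m$; note that $H$ is itself a conjunction of literals in the original space, so it can be submitted as an EQ. If $T_\labell$ really is a conjunction of a subset of $\tM$, then $H \Rightarrow T_\labell$, and every counterexample returned must be a \emph{positive} example on which at least one $\tilde m \in \tM$ evaluates to $0$; we drop all such $\tilde m$ from $H$ and repeat. Each counterexample strictly shrinks $H$, so this loop terminates in at most $|\tM|+1 \le k+1$ EQs, either confirming the target or eventually producing a \emph{negative} counterexample --- which can only happen when $T_\labell$ is not representable in $\tM$. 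In that case we fall through to the scratch branch of Algorithm~\ref{algo-monomial-anchor-online}: invoke the textbook EQ-learner for conjunctions over $\{0,1\}^n$ (initialize with all $2n$ literals, drop those falsified by positive counterexamples), exactly learning $T_\labell$ in $n+1$ further EQs, and then update $\tM$ by feeding the enlarged $\TS$ through Algorithm~\ref{algo-consistency-anchor} as prescribed.

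Putting this together, the total number of equivalence queries is at most $m(k+1)$ for the representation-attempt phase (charged to every target) plus $(n^2+k)(n+1)$ for the scratch phase, since Theorem~\ref{thm-online-anchor} bounds the number of scratch rounds by $n^2+k$. This sums to $O(mk + n^3 + nk)$. The anchor-variable assumption gives each true metafeature a distinct anchor variable $y_i \in \{x_1,\ldots,x_n\}$, forcing $k \le n$ and therefore $nk \le n^2 \le n^3$; the total is $O(mk + n^3)$, as claimed. The only subtle point, and what one must be careful about, is that the representation-attempt EQs are charged to \emph{every} target (representable or not) rather than just the ones that succeed, but this is already absorbed in the $O(mk)$ term and no separate argument is needed.
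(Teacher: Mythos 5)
Your proposal is correct and follows essentially the same route as the paper, which treats the corollary as an immediate consequence of Theorem~\ref{thm-online-anchor} combined with the standard equivalence-query bounds for conjunctions (at most $k$ EQs over the metafeature space per attempt, at most $n$ EQs per scratch learn, and at most $n^2+k$ scratch learns), giving $mk + (n^2+k)n = O(mk+n^3)$. You simply spell out the textbook most-specific-first EQ subroutines and the $k\le n$ bookkeeping that the paper leaves implicit.
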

As another application of the above abstract online problem, we now show we can learn with good sample complexity over any product distribution $D$.

\begin{theorem}
\label{thm-product-anchor}
Assume that all $D_{\labell} = D$ which is a product distribution, that the metafeatures $m_i$ satisfy the anchor variable assumption and  all the target functions $c_{\labell}$ are balanced.
We can learn hypotheses $h_1, \ldots, h_m$ of error at most $\epsilon$ by using  Algorithm~\ref{algo-monomial-anchor} with parameters
$s_1(n,\epsilon,\delta)= O(n/\epsilon \log(n/\delta))$,
$s_2(n,\epsilon,\delta)=k/\epsilon \log(m/\delta)$, and
$s_3(n,\epsilon,\delta)=n/\epsilon \log(n k/\delta)$. 
The total   number of labeled examples needed is $\tO((n^2+k)n/\epsilon \log(n/\delta) + km/\epsilon)$.
\end{theorem}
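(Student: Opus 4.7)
The plan is to combine the combinatorial bound from Theorem~\ref{thm-online-anchor} (at most $n^2+k$ ``from-scratch'' events) with standard PAC sample-complexity bounds for learning conjunctions under product distributions, and then union-bound the failure probabilities over all learning events.

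First I would separate the targets into two categories exactly as in Algorithm~\ref{algo-monomial-anchor-online}. Category (A) consists of targets that can be represented as a conjunction over the current hypothesized metafeatures $\tM$; by Theorem~\ref{thm-online-anchor} there are at most $m$ of these and Category (B), the from-scratch targets, contains at most $n^2+k$ elements. For each Category-(B) target I would draw $s_1 = O(n/\epsilon \,\log(n/\delta))$ labeled examples and run the classical conjunction-learner (start with all literals, delete any literal contradicted by a positive example). This produces a hypothesis of error $\le \epsilon$ with failure probability $\le \delta/(n^2+k)$. For each Category-(A) target I would draw $s_2 = (k/\epsilon)\log(m/\delta)$ examples, evaluate the current (at most $k$) metafeatures on each example to get a $k$-bit representation, and then PAC-learn a conjunction over those $k$ boolean features; the VC-based bound for conjunctions on $k$ literals gives error $\le \epsilon$ with failure probability $\le \delta/m$.

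The subtle point — and where $s_3$ enters — is that Algorithm~\ref{algo-consistency-anchor} (invoked inside Algorithm~\ref{algo-monomial-anchor-online}) reasons about the \emph{exact} variable set of each conjunction in $\TS$: its choice of anchor candidates $z_{i+1}$, its minimality test, and its intersection step all depend on knowing $\vars(T)$ exactly. Lemma~\ref{lem:conjunctionconsistency} and consequently the $(n^2+k)$ bound of Theorem~\ref{thm-online-anchor} are therefore only valid if the hypotheses fed into $\TS$ coincide with the true targets on their variable sets. Fortunately, under a product distribution with balanced targets every literal of $T_\labell$ is, with constant probability, the unique one ruled out by a given positive example, so $s_3 = (n/\epsilon)\log(nk/\delta)$ samples suffice to identify $\vars(T_\labell)$ exactly with probability $1-\delta/(n^2+k)$ (this is the step where the balanced / product hypotheses are actually used). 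So $s_3$ is the cost of promoting the PAC-accurate hypothesis of $s_1$ into an exactly-correct conjunction that is safe to hand to Algorithm~\ref{algo-consistency-anchor}.

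Finally, a union bound over the $\le n^2+k$ from-scratch events (each failing with probability $\le \delta/(n^2+k)$) and the $\le m$ metafeature-based events (each failing with probability $\le \delta/m$) gives overall failure $\le \delta$, while the total label cost is
\[
(n^2+k)\,(s_1+s_3) \;+\; m\,s_2 \;=\; \tilde O\!\left((n^2+k)\,\frac{n}{\epsilon}\log\tfrac{n}{\delta} \;+\; \frac{km}{\epsilon}\right),
\]
which is the bound claimed. The main obstacle, as above, is not the arithmetic but the exactness issue: without the balanced/product structure the learned hypothesis might differ from $T_\labell$ on a low-probability literal, and a single spurious or missing literal can cascade through the anchor-selection logic of Algorithm~\ref{algo-consistency-anchor}, breaking the invariants (a), (b), (c) of Lemma~\ref{lem:conjunctionconsistency} and invalidating the $n^2+k$ accounting. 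Calibrating $s_3$ to guarantee exact variable-set recovery is therefore the heart of the argument.
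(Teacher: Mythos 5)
You correctly identify the crux — Algorithm~\ref{algo-consistency-anchor} and hence the $n^2+k$ accounting of Theorem~\ref{thm-online-anchor} need the from-scratch hypotheses added to $\TS$ to have exactly the right variable sets — but the mechanism you propose for achieving this fails, and it is exactly here that the paper does something different. Under a product distribution an irrelevant variable $x_i \notin \vars(c_\labell)$ may have $\Pr[x_i=1]=1$ or $1-2^{-n}$; no polynomial number of samples can distinguish it from a relevant variable, so \emph{exact} recovery of $\vars(c_\labell)$ is impossible in general. Balancedness does not help, since it constrains only the variables inside the target, not the irrelevant ones. (Your justifying sentence is also backwards: literals of $T_\labell$ are never falsified on positive examples; what must happen is that every \emph{non}-target variable appears as $0$ in some positive example, and for those the per-example probability is only $\Omega(\epsilon/n)$, obtained from balancedness, not constant.) So calibrating $s_3$ for exact variable-set recovery, which you call the heart of the argument, cannot be done as stated.

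The paper's proof avoids exact recovery altogether. Step 1 of Algorithm~\ref{algo-monomial-anchor} uses the $s_1$ unlabeled samples to identify the set $I$ of \emph{insignificant} variables (those set to $0$ less than an $\epsilon/4n$ fraction of the time), and all subsequent learning discards examples in which some variable of $I$ is $0$, i.e., works over the conditional distribution $D_S$. Chernoff plus a union bound show that mistakes confined to insignificant variables cost at most $\epsilon/2$ error under $D$, and over $D_S$ the product structure together with balancedness gives $\Pr_{x\sim D_S}[x_i=0,\, c_\labell(x)=1] \geq c\epsilon/n$ for every \emph{significant} $i \notin \vars(c_\labell)$, so $s_3=(n/\epsilon)\log(nk/\delta)$ filtered samples eliminate all significant non-target variables and the from-scratch hypotheses are exact relative to $D_S$ (equivalently, on the significant coordinates). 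That relativized exactness is what is handed to Algorithm~\ref{algo-consistency-anchor} and what makes Theorem~\ref{thm-online-anchor} applicable, while the insignificant coordinates are charged to the separate $\epsilon/2$ budget. Your decomposition into from-scratch versus metafeature-based problems, the $s_2$ Occam bound over at most $k$ metafeatures, and the final union bound and sample accounting all mirror the paper, but without the insignificant-variable filtering and the restriction to $D_S$ the exactness step — and with it the whole argument — does not go through.
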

\begin{proof}
Let us call a variable $i$ {\em insignificant} if over a sample of size
$\Theta((n/\epsilon)\log(n/\delta))$
appears set to $0$ less than $\epsilon/4n$ fraction of the time.
 Let $I$ be the set of insignificant variables and let $S$ be the set of significant variables.
Let $D_S$ be the distribution $D$ restricted to examples that are set to $1$ on all  variables in $I$.
We can show that  error at most $\epsilon/2$ over $D_S$ implies error at most $\epsilon$ over $D$.
This is true, since by Chernoff bounds for every variable  $i$ we have $\Pr_{x \sim D}{[x_i=0]}\leq \epsilon/2n$ if $i$
appears set to $0$ less than $\epsilon/4n$ fraction of the time over a sample of size $\Theta(n \log(n)/\delta)$ . So,
  by union bound $\Pr_{x \sim D}{[\exists i \in I, x_i=0]}\leq \epsilon/2$.

It remains to show that hypotheses $h_1, \ldots, h_m$ have error at most $\epsilon/2$ over $D_S$.
First note that for any label $\labell$ if $x_i \notin c_{\labell}$ and $i \in S$, then
     $\Pr_{x \sim D_S}[x_i=0|c_{\labell}(x)=1] = \Pr_{x \sim D_S}[x_i=0]$.
This follows from two facts.  First, since the target $c_{\labell}$ is a conjunction we have
     $\Pr_{x \sim D_S}[x_i=0|c_{\labell}(x)=1] = \Pr_{x \sim D_S}[x_i=0| x_j=1 \forall x_j \in c_{\labell}]$.
Second, because $D$ is a product distribution and $D_S$ be the distribution $D$ restricted to examples that are set to $1$ on all  variables in $I$, we have
     $\Pr_{x \sim D_S}[x_i=0| x_j=1 \forall x_j \in c_{\labell}] = \Pr_{x \sim D_S}[x_i=0]$.
  Furthermore since $c_{\labell}$ is balanced over $D$ and so over $D_S$ we get   $\Pr_{x \sim D_S}[x_i=0, c_{\labell}(x)=1] \geq c \epsilon/n$.

Note that every time we learn  we learn a problem from scratch (by using the original variables), we get
 $n/\epsilon \log(n/\delta)$ labeled examples from $D_S$. Therefore significant variables that are not in the target will appear set to $0$
 in at least one positive example.
Therefore for every problem $i$ learned based on the original features (via case 1 or 3(b)), we learn the target, that is $h_i=c_i$.

These together with the argument in the Theorem~\ref{thm-online-anchor} gives the desired result.
\end{proof}

\begin{algorithm}
{\bf Input}: parameters $n$,$m$,$k$, $\epsilon$, $\delta$; $s_1(n,\epsilon,\delta)$, $s_2(n,\epsilon,\delta)$, $s_3(n,\epsilon,\delta)$,
access to unlabeled examples from $D_i$ and label oracles for problems $\labell \in \{1, \ldots, m\}$, .

\begin{enumerate}
\item Draw  $s_1(n,\epsilon,\delta)$ 
unlabeled examples and identify the set of  variables $I$ that are set to $0$ less than $\epsilon/4n$ fraction of the times.
\item Draw a set $S_1$ of $s_1(n,\epsilon,\delta)$ 
examples from
$D_1$, remove from $S_1$ those examples for which not all features in
$I$ are set to 1.
Label $S_1$ according to problem $1$.
 Find  a conjunction $h_1$ consistent with $S_1$. Initialize $\TS=\{h_1\}$.
 \item Run Algorithm~\ref{algo-consistency-anchor} with input $\TS$ to produce hypothesized metafeatures $\tM$.
\item For the learning problem  $\labell=2 $ to $m$

\begin{itemize}
\item  Draw a set $S_{\labell}$ of $s_2(n,\epsilon,\delta)$,
 examples from
$D_{\labell}$, remove from $S_{\labell}$ those examples in $S_{\labell}$ for which not all
features in $I$ is set to $1$;
re-represent each example in $S_{\labell}$ using meta-features in $\tM$ and check if we can find a conjunction consistent with $S_{\labell}$,
\begin{itemize}
\item[(a)] If yes,   let $h_{\labell}$ be its representation over the original features and record it.
\item[(b)] If not,   draw a set $S_{\labell}$ of $s_3(n,\epsilon,\delta)$,
 examples from $D_{\labell}$, remove from $S_{\labell}$ those
 examples for which a feature in $I$ is set to $1$; find  a conjunction $mh_{\labell}$ consistent with $S_{\labell}$.
   \begin{enumerate}
   \item[$\bullet$] Add $h_{\labell}$ to $\TS$.
   \item[$\bullet$] Run Algorithm~\ref{algo-consistency-anchor} with input $\TS$ to produce hypothesized metafeatures $\tM$.
   \end{enumerate}

  \end{itemize}
\end{itemize}
\end{enumerate}
{\bf Output}: Conjunctions $h_1, \ldots, h_m$.
\caption{Transfer Learning of Conjunctions with Monomial Metafeatures}
\label{algo-monomial-anchor}
\end{algorithm}

\subsection{Sparse Boolean Autoencoders and Relaxing the Anchor-Variable Assumption}

The above results (and in particular, Lemma \ref{lem:conjunctionconsistency}) have an interesting interpretation as constructing a minimal feature space for Boolean, or superimposition-based, autoencoding. 

Specifically, consider a collection of black-and-while pixel images $\{T_\labell\}$ where each $T_\labell \in \{0,1\}^n$.   Our goal is to contruct a 2-level auto-encoder ${\cal A}$ (for each $\labell$, we want ${\cal A}(T_\labell) = T_\labell$) with as few nodes in the middle (hidden) level as possible, such that nodes in the hidden level compute the AND of their inputs, and nodes in the output level compute the OR of their inputs.  We can view each hidden node in such a network as representing a ``piece'' of an image, with the autoencoding property requiring that each $T_\labell$ should be equal to the bitwise-OR of all pieces contained within it (i.e., superimposing them together).  Formally, for each hidden node $j$,  let $m_j \in \{0,1\}^n$ denote the indicator vector for the set of inputs to that node (which without loss of generality will also be the set of outputs of that node), and say that $m_j \preceq T_\labell$ if each bit set to 1 in $m_j$ is also set to 1 in $T_\labell$;  we then require $T_\labell$ to be the bitwise-OR of all $m_j \preceq T_\labell$.  Lemma \ref{lem:conjunctionconsistency} then shows that given a collection of images $\{T_\labell\}$, Algorithm \ref{algo-consistency-anchor} finds the smallest number of hidden nodes needed to perform this autoencoding, under the assumption that each metafeature $m_j$ contains some anchor-variable (some pixel set to 1 that no other metafeature sets to 1).

\label{sec:sparseboolean}
We now consider the problem of {\em sparse} Boolean autoencoding.  That is, given a set $\TS = \{T_\labell\}$, with each $T_\labell \in \{0,1\}^n$, our goal is to find a collection of metafeatures $\thh_j$ (perhaps more than $n$ of them) such that each  $T_\labell \in \TS$ can be written as the bitwise-OR of at most $k$ of the $\thh_j$ (where $k \ll n$).  Clearly this is trivial by having one metafeature $\thh_j$ for each $T_\labell$, so our goal will be to have the (approximately) fewest of them subject to this condition.  Additionally, because we want sparse reconstruction, we want for each $T_\labell$ that $|\{j: \thh_j \preceq T_\labell\}|$ should be small as well.  

This problem has two motivations.  From the perspective of autoencoding, this corresponds to finding a sparse autoencoder (viewing the $T_\labell$ as pixel images). From the perspective of life-long learning, if this can be done online then (viewing the $T_\labell$ as conjunctions)  it will allow for fast learning, since conjunctions of $k$ out of $N$ variables can be learned with sample complexity (or equivalence queries) only $O(k \log N)$; in this case we would actually not need the additional ``sparse reconstruction'' property above.

To solve this problem, we make a relaxed version of the anchor-variable assumption (anchor-variables no longer make sense once the number of metafeatures exceeds the number of input features $n$) which is that each metafeature should have a {\em set} of $\leq c$ variables (for some constant $c$) such that any $T_\labell$ containing that set should have the metafeature as one of its $k$ ``relevant metafeatures''.  We call this the {\em $c$-anchor-set} assumption. Note that metafeatures satisfying the anchor-{\em variable} assumption will also satisfy the $c$-anchor-set assumption for $c=1$.  Note also that in general the $c$-anchor-set assumption is a requirement on both the metafeatures and on the set $\TS$.   Formally, we make the following definition:
\begin{definition}\label{def:anchorset}
A set of metafeatures $M=\{m_j\}$ and set of targets $\TS=\{T_\labell\}$ satisfy the {\em $c$-anchor-set} assumption at sparsity level $k$ if 
\begin{enumerate}\vspace*{-0.1in}\setlength{\itemsep}{1pt}
\item for each $T_\labell \in \TS$ there exists a set $R_\labell$ of at most $k$ ``relevant'' metafeatures in $M$ such that $T_\labell$ is the bitwise-OR of the metafeatures in $R_\labell$, and
\item For each $m_j \in M$ there exists $y_j \preceq m_j$ of Hamming weight at most $c$ such that for all $\labell$, if $y_j \preceq T_\labell$ then $m_j \in R_\labell$.  Note that in particular this implies that $|\{j:m_j \preceq T_\labell\}| \leq k$.
\end{enumerate}
\end{definition}
We now prove that under this assumption, we can solve for a near-optimal set of metafeatures $\{\thh_j\}$.
\begin{theorem}\label{thm:sparseauto}
Given a set of targets $\TS = \{T_\labell\}$ in $\{0,1\}^n$, suppose there exists a set of metafeatures $M$ satisfying the $c$-anchor-set assumption at sparsity level $k$.  Then in time $poly(n^c)$ we can: 
\begin{enumerate}\vspace*{-0.1in}\setlength{\itemsep}{1pt}
\item Find a set of $O(n^c)$ metafeatures such that each $T_\labell \in \TS$ can be written as the bitwise-OR of at most $k$ of them, and
\item Find a set of $O(|M|\log(n|\TS|))$ metafeatures that satisfy the $c$-anchor-set assumption with respect to $\TS$  at sparsity level $O(k\log(n|\TS|))$.  
\end{enumerate}
\end{theorem}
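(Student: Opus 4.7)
My plan for Part 1 is to enumerate all potential anchor sets directly. For every $y \in \{0,1\}^n$ of Hamming weight at most $c$ that is dominated by some target in $\TS$, I will define the candidate metafeature
\[
\thh_y \;=\; \bigwedge_{T_\labell \in \TS:\, y \preceq T_\labell} T_\labell,
\]
the bitwise-AND of all targets containing $y$. This produces $O(n^c)$ candidates in time $\poly(n^c, |\TS|)$. The key observation (reusing the reasoning behind Algorithm~\ref{algo-consistency-anchor}) is that for every true metafeature $m_j \in M$ with true anchor $y_j$, condition (2) of Definition~\ref{def:anchorset} forces $m_j \preceq T$ for every $T \succeq y_j$, so $m_j$ lies below every term of the intersection defining $\thh_{y_j}$, giving $m_j \preceq \thh_{y_j}$; and $\thh_{y_j} \preceq T_\labell$ holds by construction whenever $y_j \preceq T_\labell$. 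Sandwiching $T_\labell = \bigvee_{j \in R_\labell} m_j$ between $\bigvee_{j \in R_\labell} \thh_{y_j}$ and $T_\labell$ itself then shows that $T_\labell$ is the bitwise-OR of the $\leq k$ candidates $\{\thh_{y_j}\}_{j \in R_\labell}$, proving Part 1.

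For Part 2 I will pass to a linear program that bakes both covering and explicit per-target capacity constraints into the objective. Using the $O(n^c)$ candidates from Part 1, form the LP that minimizes $\sum_y x_y$ subject to
\begin{align*}
\sum_{y:\, \thh_y \preceq T_\labell,\, (\thh_y)_i = 1} x_y &\,\geq\, 1 \quad \text{for all } (T_\labell, i) \text{ with } (T_\labell)_i = 1,\\
\sum_{y:\, \thh_y \preceq T_\labell} x_y &\,\leq\, k \quad \text{for all } T_\labell \in \TS,\\
x_y &\,\geq\, 0.
\end{align*}
Setting $x_{y_j} = 1$ on the true anchors $y_1,\ldots,y_{|M|}$ is feasible: the covering constraints hold by Part 1, and the capacity constraints hold because condition (2) of Definition~\ref{def:anchorset} forces $\{j : y_j \preceq T_\labell\} \subseteq R_\labell$, which has size $\leq k$. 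Hence the LP optimum $x^*$ satisfies $\sum_y x_y^* \leq |M|$.

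Next I will round $x^*$: pick each $\thh_y$ independently with probability $\min(1, \alpha x_y^*)$ for $\alpha = \Theta(\log(n|\TS|))$ with a sufficiently large constant. A standard Chernoff-plus-union-bound argument shows simultaneously with high probability that (i) every covering constraint holds, (ii) for each target the number of picked $\thh_y$ with $y \preceq T_\labell$ is $O(k\log(n|\TS|))$, and (iii) the total number of picks is $O(|M|\log(n|\TS|))$. A valid assignment can be obtained by a polynomial number of independent trials or by standard derandomization. Taking $R_\labell = \{\thh_y \text{ picked} : y \preceq T_\labell\}$ then exhibits the $c$-anchor-set assumption at sparsity level $O(k\log(n|\TS|))$: each picked $\thh_y$ has anchor $y$ of weight $\leq c$; the implication $y \preceq T_\labell \Rightarrow \thh_y \in R_\labell$ holds by construction; and $T_\labell = \bigvee_{\thh \in R_\labell} \thh$ follows from the covering constraints.

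The main obstacle, and the reason I will go through an LP rather than a purely combinatorial route, is that Part 2 is a bicriteria bound. A straight greedy set-cover on the universe of $(T_\labell, i)$-pairs would yield the total count $O(|M|\log(n|\TS|))$ but gives no per-target sparsity guarantee (e.g., when all targets coincide, every picked candidate applies to every target); running local greedies per target would yield the sparsity but only a weak $O(k|\TS|\log n)$ total. Baking the per-target capacity constraints directly into the LP, combined with log-scale randomized rounding and Chernoff concentration on both the covering and capacity sides, is what forces both bounds to hold at once.
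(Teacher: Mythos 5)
Your proof is correct and follows essentially the same route as the paper: Part 1 via the candidate metafeatures $\thh_y$ defined as the bitwise-AND of all targets containing a weight-$\leq c$ vector $y$, and Part 2 via the same LP with per-bit covering constraints and per-target capacity-$k$ constraints, feasible at value $|M|$ on the true anchors, followed by independent randomized rounding at scale $\Theta(\log(n|\TS|))$ with Chernoff and union bounds. The only additions beyond the paper's argument are minor (explicitly noting repetition/derandomization to extract a valid rounding, and the remark on why greedy fails), so there is nothing to flag.
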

\begin{proof}
Item (1) is the easier of the two.  For each $y \in \{0,1\}^n$ of Hamming weight at most $c$, define $\thh_y$ to be the bitwise-AND of all $T_\labell \in \TS$ such that $y \preceq T_\labell$.  By definition of the anchor-set assumption, for each $m_j \in M$ there exists $y_j \preceq m_j$ of Hamming weight at most $c$ such that for all $\labell$, if $y_j \preceq T_\labell$ then $m_j \in R_\labell$. Therefore we have both (a) $m_j \preceq \thh_{y_j}$ and (b) $\thh_{y_j} \preceq T_\labell$ for all $\labell$ such that $m_j \in R_\labell$.  Therefore each $T_\labell$ is the bitwise-OR of the (at most $k$)  metafeatures $\thh_{y_j}$ such that $m_j \in R_\labell$.

For item (2), we begin by creating $O(n^c)$  metafeatures $\thh_y$ as above.  We next set up a linear program to find an optimal fractional subset of these metafeatures, and then round this fractional solution to a set of metafeatures  $\tilde{M}$ satisfying (2).  Specifically, the LP has one variable $Z_y$ for each $\thh_y$ with objective
\begin{eqnarray*}
&\mbox{Minimize:} & \sum_y Z_y, \\
\mbox{Subject to : } 
(1) & \mbox{for all $y$:}& 0 \leq Z_y \leq 1\\
(2) & \mbox{for all $\labell, i$:}&\textstyle  \sum_{y: e_i \preceq \thh_y \preceq T_\labell} Z_y \geq 1 \mbox{\ \ \ ($e_i$ is the unit vector in coordinate $i$)}\\
(3) & \mbox{for all $\labell$:} & \textstyle \sum_{y: \thh_y \preceq T_\labell} Z_y \leq k
\end{eqnarray*}
Here, constraint (2) ensures that each $T_\labell$ is fractionally covered by all the metafeatures contained inside it, and constraint (3) ensures that each $T_\labell$ fractionally contains at most $k$ metafeatures.  Note also that setting $Z_{y_j}=1$ for each $m_j \in M$ (and setting all other $Z_y=0$) satisfies all constraints at objective value $|M|$.  

We now produce our output set of metafeatures $\tilde{M}$ by independently rounding each $Z_y$ to 1 with probability $\min[1, Z_y\ln(n^2|\TS|)]$.  Clearly $\E[|\tilde{M}|] = O(|M|\log(n|\TS|))$ so the key issue is the coverage of each $T_\labell$ and the size of the set $\tilde{R}_\labell = \{\thh_y \in \tilde{M}: \thh_y \preceq T_\labell\}$.  Note that item (2) of Definition \ref{def:anchorset} will be satisfied by how the $\thh_y$ were constructed (taking the bitwise-AND of all $T_\labell$ such that $y \preceq T_\labell$).  First, for coverage, for each $\labell$ and $i$ such that variable $i$ is set to 1 by $T_\labell$, the probability that $\tilde{M}$ does not contain some $\thh_y$ such that $e_i \preceq \thh_y \preceq T_\labell$ is maximized when constraint (2) is satisfied at equality and all associated $Z_y$ are equal (by concavity).  This in turn is at most $\lim_{\epsilon \rightarrow 0} (1-\epsilon\ln(n^2|\TS|))^{1/\eps} = 1/(n^2|\TS|)$.  Thus, by the union bound, the probability that any $T_\labell$ fails to be completely covered by $\tilde{R}_\labell$ is at most $1/n$.   Now, to address the size of the sets $\tilde{R}_\labell$, the expected size of each $\tilde{R}_\labell$ by constraint (3) and the rounding step is at most $k\ln(n^2|\TS|) \leq \max[k,3]\ln(n^2|\TS|)$. By Chernoff bounds, the probability any given $\tilde{R}_\labell$ has size more than twice this value is at most $e^{-\max[k,3]\ln(n^2|\TS|)/3} \leq 1/(n^2|\TS|)$.  So, by the union bound, the probability that any $\tilde{R}_\labell$ is too large is at most $1/n^2$.
\end{proof}

Theorem \ref{thm:sparseauto} shows that we can efficiently find a near-optimal sparse autoencoder for any set of targets in $\{0,1\}^n$ having an optimal encoder satisfying the $c$-anchor-set assumption for constant $c$.
Theorem \ref{thm:sparseauto} also has the following corollary for online learning from equivalence queries, similar to Corollary \ref{cor:EQ}.

\begin{corollary}\label{cor:sparseauto}
Let $\TS$ be a sequence of $m$ conjunctions for which there exists a set $M$ of conjunctive metafeatures  satisfying the $c$-anchor-set assumption at sparsity-level $k$ for some constant $c$.  Then this sequence can be efficiently learned using only $O(mk\log(n) + n^2|M|)$ equivalence queries total.
\end{corollary}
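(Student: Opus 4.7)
}
The plan is to combine the online skeleton of Algorithm~\ref{algo-monomial-anchor-online} with the sparse‐autoencoder construction of Theorem~\ref{thm:sparseauto}, using a standard online algorithm for learning sparse conjunctions on top of the hypothesized metafeature set. I would maintain a list $\TS \subseteq \{T_1,\ldots,T_m\}$ of targets ``learned from scratch'' (initially empty) together with the metafeature set $\tM$ obtained by running Theorem~\ref{thm:sparseauto} on $\TS$. For each incoming target $T_\labell$:
\begin{enumerate}
\item Attempt to learn $T_\labell$ as a conjunction of at most $k' = O(k\log(n|\TS|))$ of the $|\tM| = O(|M|\log(n|\TS|))$ metafeatures currently in $\tM$, using the standard mistake-bounded/EQ algorithm for sparse conjunctions, which issues at most $O(k'\log|\tM|)$ equivalence queries on any $k'$-of-$|\tM|$ conjunction.
\item If this fails, learn $T_\labell$ exactly from scratch over the original $n$ variables using the $O(n)$-EQ algorithm for general conjunctions, append $T_\labell$ to $\TS$, and re-invoke Theorem~\ref{thm:sparseauto} (boosting success probability in the usual way) to refresh $\tM$.
\end{enumerate}

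The cost then splits into two pieces. Step~1 is executed at most $m$ times, at cost $O(k'\log|\tM|) = O\!\bigl(k\log(nm)\cdot\log(|M|\log(nm))\bigr)$ per execution; under the mild assumption that $m$ and $|M|$ are at most polynomial in $n$ (which is the interesting regime), this collapses to $O(k\log n)$ per target and $O(mk\log n)$ total. Step~2 costs $O(n)$ equivalence queries per invocation, so the remaining task is to bound the number of invocations by $O(n|M|)$, which will give the $O(n^2|M|)$ term.

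The main obstacle, as in the anchor-variable case, is bounding the number of ``scratch'' events, and this is the part I would spend the most effort on. The argument I would pursue is a progress-measure bound on the approximations $\thh_y$ constructed inside Theorem~\ref{thm:sparseauto}. By construction,
\[
\thh_y \;=\; \bigwedge_{T\in\TS,\ y\preceq T} T,
\]
so appending a new $T_\labell$ to $\TS$ can only (i) define some new $\thh_y$ (if no prior target in $\TS$ had $y\preceq T$) or (ii) strictly shrink some existing $\thh_y$ by deleting at least one bit. A scratch event means $T_\labell$ is not coverable as an AND of $O(k\log(nm))$ metafeatures of $\tM$, whereas the true optimal set $M$ does cover it; I would argue that this forces at least one anchor-set value $\thh_{y_j}$ (for some $m_j \in M$ whose anchor $y_j\preceq T_\labell$) to shrink strictly closer to $m_j$, or to be newly instantiated. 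Since each $\thh_{y_j}$ has at most $n$ bits, it can undergo at most $n$ such shrinks, and there are only $|M|$ true metafeatures whose anchor sets are relevant, giving at most $O(n|M|)$ scratch events.

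Combining the two phases yields $O(mk\log n) + O(n\cdot n|M|) = O(mk\log n + n^2|M|)$ equivalence queries, as claimed. The delicate point in making this rigorous is step~2's progress argument: one must verify that after the refreshed $\tM$ is built, every new failure to cover really does correspond to a strict refinement of some anchor-set metafeature (ruling out the possibility that the LP-rounding randomness or ``oversized'' $\thh_y$'s mask the progress), which is where I expect most of the work to go.
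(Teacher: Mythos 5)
There is a genuine gap, and it sits exactly where you flagged it: the progress argument does not go through if you route the construction through the LP-rounded metafeature set of Theorem~\ref{thm:sparseauto}(2). The rounded set $\tM$ only guarantees coverage of the targets already in $\TS$ (the LP constraints are written over $\TS$), so a \emph{new} target $T_\labell$ can fail to be expressible over $\tM$ simply because the randomized rounding did not select the particular $\thh_{y_j}$'s relevant to $T_\labell$ --- even when every relevant $\thh_{y_j}$ already satisfies $\thh_{y_j} \preceq T_\labell$ and $m_j \preceq \thh_{y_j}$. In that situation you pay $O(n)$ queries for a scratch event, but adding $T_\labell$ to $\TS$ shrinks no anchor-set metafeature (they are all already contained in $T_\labell$), so your potential function does not decrease and the $O(n|M|)$ bound on scratch events is lost; ``boosting'' the rounding does not repair this, because the rounding is never constrained by targets you have not yet seen, and the corollary's bound is stated deterministically. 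A secondary issue is the per-target cost: learning a $k'$-of-$|\tM|$ conjunction with $k'=O(k\log(n|\TS|))$ costs $O(k\log(nm)\cdot\log(|M|\log(nm)))$ queries, which only collapses to the claimed $O(k\log n)$ under your added assumption that $m$ and $|M|$ are $\poly(n)$.

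The paper's proof avoids both problems by never invoking the rounding at all: it keeps \emph{all} $O(n^c)$ candidate metafeatures $\thh_y$ (one per $y$ of Hamming weight at most $c$), initialized to the conjunction of all variables, and after each scratch-learned $T_\labell$ updates $\thh_y \leftarrow \thh_y \,\&\, T_\labell$ for every $y \preceq T_\labell$. Each target is attempted as a conjunction of at most $k$ (not $O(k\log(nm))$) of these $O(n^c)$ features via Winnow, costing $O(k\log n^c)=O(k\log n)$ equivalence queries for constant $c$. The invariant $m_j \preceq \thh_{y_j}$ is maintained, so if $T_\labell$ cannot be written over the current $\thh_y$'s, some relevant $\thh_{y_j}$ must satisfy $\thh_{y_j}\not\preceq T_\labell$ and therefore strictly shrinks at the update; this gives the deterministic bound of at most $n|M|$ scratch events, each costing at most $n$ queries, hence $O(mk\log n + n^2|M|)$ total. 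If you replace your rounded $\tM$ by this full candidate pool (which is affordable since $c$ is constant), your potential-function argument becomes exactly the paper's and the proof closes.
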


\begin{proof}
We instantiate $O(n^c)$ metafeatures $\thh_y$, one for each $y \in \{0,1\}^n$ of Hamming weight at most $c$, setting each $\thh_y$ initially to the conjunction of all variables.  Given a new target $T_\labell$, we try to learn it as a conjunction of at most $k$ of these metafeatures using at most $O(k\log n^c)$ equivalence queries using the Winnow algorithm.  If we are unsuccessful, we learn $T_\labell$ from scratch using at most $n$ equivalence queries. We then (viewing $T_\labell$ and the $\thh_y$ as their indicator vectors) let $\thh_y \leftarrow \thh_y \; \& \; T_\labell$ (where ``$\&$'' denotes bitwise-AND) for all $\thh_y$ such that $y \preceq T_\labell$.  This maintains the invariant that for each $m_j \in M$, we have $m_j \preceq \thh_{y_j}$, which implies that each time we learn some $T_\labell$ from scratch we shrink at least one $\thh_{y_j}$ by at least one variable.  This can happen at most $n|M|$ times.
\end{proof}

\section{Life-long Learning of Polynomials}
\label{se:poly}
We now show an application of the results in Section~\ref{se:monomial} to the case where the target functions are polynomials from $\{0,1\}^n$ to $\reals$, whose terms ``share'' a not too large number of pieces. Specifically, we assume there exist $k$ distinguished monomials (which might overlap) such that each monomial in each target polynomial can be written as a product of some subset of them.  For example, if our distinguished monomials are $\{x_1x_2x_3, x_3x_4x_5, x_5x_6x_7, x_7x_8x_1\}$ then we might have polynomials such as $4x_3x_4x_5x_6x_7 - 2x_5x_6x_7x_8x_1$ and $3x_1x_2x_3x_4x_5 + 3x_1x_2x_3x_7x_8$.  If the target polynomials use $r$ distinct monomials in total, then viewed as a network we have $k$ nodes in a first hidden layer, where each is a {\em product} of some of the inputs, $r$ nodes in a second hidden layer, where each is a {\em product} of outputs of the first hidden layer, and then the final outputs (our target functions) are weighted {\em linear} functions of the second hidden layer.
Efficiently learning polynomials requires membership queries (under the assumption that juntas are hard to learn) in addition to equivalence queries or random examples even in the single task setting~\cite{ss93}.
 So we will assume access to membership queries as well.  However, our goal will be to use these sparingly, only when we need to learn a new function from scratch.   When learning from scratch we use an algorithm of Schapire and Sellie \cite{ss93} that learns polynomials exactly.  Any  function from $\{0,1\}^n$ to $\reals$  has a unique representation as a polynomial  over $\{0,1\}^n$, so learning exactly means learning the exact functional form of the target function as a polynomial.

As a warmup, let's  first consider a simple case.  Assume that the target functions are polynomials that simply use at most $k$ distinct monomials in total.  
This corresponds to a network with with only one hidden layer of $k$ nodes.
In this case, there is a very simple algorithm that exploits the structure of the problem. Let $\tM$ be the set of hypothesized monomials. Given a new target function,  we try to learn a linear function over the monomials in  $\tM$.
If we succeed, we are done and move on to the next problem. If not, we learn from scratch using queries; we will clearly get at least one new monomial we have not seen, and add it to the set $\tM$.
So, we only need to learn $k$ problems from scratch.

We now  provide an algorithm for the general, more interesting case.
Our theoretical guarantees are under the assumptions that each polynomial in our family has  $L_1$ norm bounded by $B$ and the number of terms in each is bounded by $t$. If the target function has an $L_1$ norm  bounded by $B$ and its monomials can indeed be written as products of our metafeatures, then by considering all products of metafeatures and running an $L_1$-based algorithm for learning linear functions \cite{LLW95}, we can achieve low mean squared error using only $O(B^2 \log(2^k)) = O(B^2k)$ examples.

\begin{algorithm}
{\bf Input}: $n$,$m$.
\begin{enumerate}
\item Let $\tM=\emptyset$.  $\tM$ is the set of hypothesized metafeatures for the first hidden layer.\\
 Let $TS=\emptyset$.  $TS$ is the set of terms used to create the hypothesized metafeatures in  $\tM$.
\item For the learning problem  $\labell=1 $ to $m$
\begin{itemize}
\item[(a)] Create the set $\tTS$ of terms obtained by taking all possible conjunctions of the hypothesized metafeatures in $\tM$.
\item[(b)] Attempt to learn problem $\labell$ as a linear function over the terms in $\tTS$ to low mean squared error (quadratic loss) using $O(B^2k)$ examples.
\begin{enumerate}
   \item[$\bullet$] If we succeed, record the hypothesis.
   \item[$\bullet$] Otherwise, run the algorithm of Schapire and Sellie \cite{ss93} to learn  the target  $T_{\labell}$ for problem $\labell$ exactly  based on the original feature representation with equivalence and membership queries.
       \begin{enumerate}
   \item  Expand $TS$ by adding  any term in $T_{\labell}$ that was not in $TS$.
   \item Run  Algorithm~\ref{algo-consistency-anchor} with input $TS$  to ``compactify'' it into the fewest number of (possibly overlapping) conjunctive metafeatures that can be used to recreate all the terms in $TS$. Let  $\tM$ be the resulting metafeatures.
   \end{enumerate}
   \end{enumerate}
  \end{itemize}
\end{enumerate}
{\bf Output}: Hypothesis functions of low error for each learning task.
\caption{Multi-task  learning for polynomial target functions}
\label{algo-polynomials}
\end{algorithm}
\begin{theorem}  Assume that the monomials corresponding to the first network  layer satisfy the anchor assumption and the $L_1$ norm of the target polynomials is bounded by $B$. Consider running  
Algorithm~\ref{algo-polynomials}.
The number of targets needed to learn from scratch  is $n^2+k$. Furthermore the number of  hypothesized metafeatures satisfies $|\tM| \leq k$ at any time, thus the sample complexity of learning problems in Step 2(b) is $O(B^2k)$ per problem.
\end{theorem}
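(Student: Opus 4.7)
The plan is to prove the theorem in three parts: first establish $|\tM|\le k$ throughout the run, then derive the $O(B^2 k)$ sample complexity from this bound, and finally argue the $n^2+k$ bound on from-scratch events by adapting the graph-based potential argument used in the proof of Theorem~\ref{thm-online-anchor}.

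For the bound $|\tM|\le k$, I would observe that at every point $TS$ is a set of conjunctions, each of which is (by assumption) a product of some subset of the true first-layer metafeatures $m_1,\dots,m_k$. Therefore $TS$ always satisfies the hypothesis of Lemma~\ref{lem:conjunctionconsistency}, so the output $\tM$ of Algorithm~\ref{algo-consistency-anchor} has size at most $k$. In particular, the set $\tTS$ of all conjunctions of elements of $\tM$ has size at most $2^k$. For any target polynomial of the assumed structure, each of its monomials is a product of true metafeatures, hence (by conditions (a)--(c) of Lemma~\ref{lem:conjunctionconsistency}) expressible as a product of elements of $\tM$, i.e., as a single element of $\tTS$. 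Thus the target is a linear function over $\tTS$ of $L_1$ norm at most $B$, and the standard $L_1$ regression bound of~\cite{LLW95} yields mean squared error $\epsilon$ from $O(B^2\log|\tTS|)=O(B^2 k)$ examples, which is the Step~2(b) sample complexity claim.

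The main obstacle, and the part I would focus the most care on, is bounding the number of from-scratch events by $n^2+k$. I would define the directed graph $G_{TS}$ on $\{x_1,\dots,x_n\}$ with edge $(x_i,x_j)$ iff every $T\in TS$ containing $x_i$ also contains $x_j$, exactly as in the proof of Theorem~\ref{thm-online-anchor}. Initially $TS=\emptyset$ and $G_{TS}$ is the complete directed graph ($n^2$ edges); the edge set only shrinks as $TS$ grows. I then want to show that whenever a learn-from-scratch event occurs, the subsequent re-run of Algorithm~\ref{algo-consistency-anchor} on the enlarged $TS$ either deletes at least one edge of $G_{TS}$ or strictly increases $|\tM|$. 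Summing, the total number of from-scratch events is at most $n^2+k$.

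To justify the progress step, I would argue by contradiction: suppose a failure occurs but the re-run of Algorithm~\ref{algo-consistency-anchor} produces exactly the same sequence of choices $z_i$ and the same metafeatures $\thh_i$ as before, leaving $\tM$ unchanged and cutting no edges. Then as in cases (1)--(3) of the Theorem~\ref{thm-online-anchor} proof, every newly added term $T$ must satisfy $\vars(h(T))=\vars(T)$ under the previous $\tM$, which means $T$ is a product of metafeatures in $\tM$. Applied to every term of the new polynomial, this implies the polynomial is a linear combination (with $L_1$ norm $\le B$) of elements of $\tTS$, so Step~2(b) would have succeeded---contradicting the fact that we invoked Schapire--Sellie. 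Therefore at each from-scratch event at least one of the three cases identified in Theorem~\ref{thm-online-anchor} fires; cases (1)--(2) strictly remove an edge of $G_{TS}$, and case (3) strictly increases $|\tM|$. Since $|E(G_{TS})|\le n^2$ initially and $|\tM|\le k$ by the first part, this gives at most $n^2+k$ from-scratch events, completing the proof.
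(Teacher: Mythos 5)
Your overall route is the same as the paper's: $|\tM|\le k$ from Lemma~\ref{lem:conjunctionconsistency}, the $O(B^2k)$ per-problem bound from $|\tTS|\le 2^{k}$ together with the $L_1$-regression guarantee of \cite{LLW95}, and the $n^2+k$ bound by reducing to the potential argument of Theorem~\ref{thm-online-anchor}, keyed on the observation that a Step 2(b) failure means some monomial of the new target is not a conjunction of the current hypothesized metafeatures. The paper simply cites Theorem~\ref{thm-online-anchor} at this point rather than re-walking its case analysis; your version of the reduction (at least one newly added term is non-representable, so the first difference in the re-run of Algorithm~\ref{algo-consistency-anchor} either cuts an edge of $G_{TS}$ or adds a metafeature) is exactly the intended argument, stated in contrapositive form.

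One sentence in your second part is wrong and should be deleted or weakened: the claim that \emph{any} monomial that is a product of true metafeatures is expressible as a product of elements of $\tM$ ``by conditions (a)--(c) of Lemma~\ref{lem:conjunctionconsistency}.'' Those conditions only constrain how $\tM$ behaves on the conjunctions currently in $TS$; condition (a) allows a hypothesized metafeature $\thh_i$ to be strictly more specific than its associated true metafeature $m_{t_i}$, so a future monomial containing $m_{t_i}$ but not the extra variables of $\thh_i$ need not be representable. For instance, if $TS$ consists of the single conjunction $m_1\wedge m_2$, Algorithm~\ref{algo-consistency-anchor} outputs the single metafeature $m_1\wedge m_2$, and the later monomial $m_1$ is not a conjunction of elements of $\tM$. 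Indeed, if your claim were true, Step 2(b) could never fail after the first from-scratch problem, which contradicts your own third part. Fortunately nothing is lost: the $O(B^2k)$ sample complexity claim only needs $|\tTS|\le 2^{|\tM|}\le 2^{k}$, since Step 2(b) draws $O(B^2\log|\tTS|)$ examples by design and the \cite{LLW95} guarantee is invoked only when the target happens to be representable over $\tTS$ with $L_1$ norm at most $B$; when it is not, the attempt fails and that problem is charged to the $n^2+k$ from-scratch budget, exactly as in your third part.
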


\begin{proof}
In Algorithm~\ref{algo-polynomials}, $\tM$  represents the set of hypothesized metafeatures for the first hidden layer -- they are learned using Algorithm~\ref{algo-consistency-anchor}; let $k'=|\tM|$.
Let $\tTS$=all possible conjunctions of hypothesized metafeatures in $\tM$; so $TS \subseteq \tTS$, $|\tTS|= 2^{k'}$.

We know that in the true underlying network  the metafeatures in the first middle layer are monomials satisfying the anchor assumption and the metafeatures in the second middle layer are monomials of meta-features in the first layer. Note that every time we fail to learn in Step 2(b) we know that at least one of the monomials that can make up the target polynomial (which is a metafeature second level of the true network) cannot be written as a conjunction of hypothesized first level metafeatures  $\tM$. Since we create $\tM$ by using  Algorithm~\ref{algo-consistency-anchor}, by Theorem~\ref{thm-online-anchor} we only need  to learn  at most $n^2 +k$ problems from from scratch (that is $|TS| \leq n^2 +k$), and furthermore, $k' \leq k$.
\end{proof}

Note that while the sample complexity of  Algorithm \ref{algo-polynomials} is linear in $k$ for problems learned from scratch, its {\em running time} is exponential in $k$, due to the work in creating the set  $\tTS$.  However, a poly$(k)$ bound seems unachievable because it would require solving the junta learning problem.  In particular, the problem of learning polynomials over $k$ metafeatures is at least as hard as  learning polynomials over $\{0,1\}^k$ (because even if the true metafeatures were given to us in advance,  one possibility is that the targets could be arbitrary polynomials over $x_1,\ldots,x_k$).    Thus, for this problem one should think of $k$ as small.  

\section{Discussion and Open Problems}
In this work we present algorithms for learning new internal representations when presented with a series of learning problems arriving online that share different types of commonalities.  For the case of linear threshold functions sharing  linear subspaces, we require log-concave distributions to ensure that error can be both upper-bounded and lower-bounded by some ``nice'' function of angle: the lower bound helps to ensure that the span of accurate hypotheses is close to the span of their corresponding true targets (though one must be careful with error accumulation), and the upper-bound ensures that a sufficiently-close approximation to the span of the true targets is nearly as good as the span itself.   It is an interesting question whether one can extend these results to distributions that do not have such properties while still maintaining the streaming nature of the algorithms (i.e., remembering only the learned rules and not the data from which they were generated).  For the case of product metafeatures, our results have natural interpretations as autoencoders, which interestingly do not require assumptions such as the problem matrix being incoherent or a generative model, only the anchor-variable or anchor-set assumption.  It would be interesting to see whether an analog of the anchor-set assumption could be applied to dictionary learning problems such as in \cite{AroraGM14}.

\paragraph{Acknowledgements}
\noindent  This work was supported in part by  NSF grants CCF-0953192, CCF-1451177,
 CCF-1422910,  IIS-1065251, ONR grant N00014-09-1-0751,  AFOSR grant
FA9550-09-1-0538,  and a Microsoft
Research Faculty Fellowship.

{
\bibliographystyle{plain}
\bibliography{transfer}
}

\appendix
\section{Proofs for halfspaces with more complex common structure}

\setcounter{algorithm}{1} 
\begin{algorithm}

{\bf Input}: $n$,$m$,$k$, access to labeled examples for problems $i \in \{1, \ldots, m\}$, parameters $\epsilon$,
 $\wepsacc$,  $\uepsacc$.
\begin{enumerate}
\addtolength{\itemsep}{-0.1in}
\item Learn the first target to error $\wepsacc$ to get an $n$-dimensional vector $\talpha_1$.
\item Set $\tw_1 = \talpha_1$, $\tk=1$, $\tu_1=(1)$, $\tr=1$,  $\tc_1=(1)$,
 and $i_1=1$.
\item For the learning problem  $i=2 $ to $m$
\begin{itemize}
\item Try to  learn $\smdim$-sparsely by using the $\tr$ dimensional representation given by the second level meta-features  $v \rightarrow \tU \tW v$.
I.e., check whether for  the learning problem $i$ there exists a $\smdim$-sparse hypothesis
$ \sgn(\alpha_{i,1} (\tu_1 \tW v)+ \cdots + \alpha_{i,\tr} (\tu_{\tr} \tW v))$ of error at most $\epsilon$.
\begin{itemize}
\item[(a)] If yes, set $\tc_i=(\alpha_{i,1}, \ldots, \alpha_{i,\tk} )$.
\item[(b)] Otherwise,  check whether for learning problem $i$ there exists a hypothesis in the $\tk$
dimensional representation given by the first level meta-features  $v \rightarrow \tW v$ of error at most $\uepsacc$,
i.e., a hypothesis
$ \sgn(\alpha_{i,1} (\tw_1 \cdot v)+ \cdots + \alpha_{i,\tk} (\tw_{\tk} \cdot v))$ of error $\leq \uepsacc$.
\begin{itemize}
\item[(a)] If yes, set $\tr=\tr+1$, $\tu_{\tr}=(\alpha_{i,1}, \ldots, \alpha_{i,\tk})$, $j_{\tr}=i$.
 Extend all rows of $\tC$ with one zero, set $\tc_i = e_{\tr}$.
\item[(b)] If not,  learn  a classifier $\talpha_i$  for problem $i$ of accuracy $\wepsacc$ by using the original features.
Set $\tk=\tk+1 $,  $i_{\tk}=i$ ,
  $\tw_{\tk} = \talpha_i$. Extend all rows of $\tU$ by one zero, set $\tr=\tr+1$,   $\tu_{\tr} = e_{\tk}$.   Extend
  all rows of $\tC$ with one zero, set $\tc_i = e_{\tr}$, $j_{\tr}=i$.
  \end{itemize}
  \end{itemize}
\end{itemize}
\item Let $\tW$ be an $\tk \times n$ matrix whose rows are $\tw_1, \ldots, \tw_{\tk}$;
let $\tU$ be an $\tr \times \tk$ matrix whose rows are $\tu_1, \ldots, \tu_{\tr}$;
and let $\tC$ be the matrix $m \times \tk$ matrix whose rows are
 $\tc_1, \ldots, \tc_{\tk}$. Compute $\tA=\tC \tU \tW$.
\end{enumerate}
{\bf Output}: $m$ predictors; predictor $i$ is $v \rightarrow \sgn(\tA_i \cdot v)$
\caption{Life-long Learning with two levels of linear shared metafeatures\label{algo-two}}
\end{algorithm}

We now provide the algorithm and proof for Theorem \ref{thm:linmetalogconc2levels}.
\setcounter{theorem}{1}
\begin{theorem}  Assume  all marginals $D_i$ are isotropic log-concave and the target functions satisfy the above conditions. Consider $\tgamma \leq c\epsilon$, 
 $\uepsacc \leq c\frac{\tgamma \epsilon}{\smdim}$, $\gamma \leq c\uepsacc$,
and $\wepsacc \leq c\frac{\gamma\uepsacc}{k}$
for (sufficiently small) constant $c>0$.
Consider running 
Algorithm~\ref{algo-two}
with parameters $\epsilon$, $\wepsacc$, and $\uepsacc$.
Then $\tk \leq k$ and $\tr \leq \smdim r$.
Moreover the total   number of examples needed to learn all the problems to error $\epsilon$
is $\tO(nk/{\wepsacc} + kr/{\uepsacc} + m \log(r)/\epsilon)$.
\end{theorem}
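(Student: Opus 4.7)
\begin{proofof}{sketch of Theorem~\ref{thm:linmetalogconc2levels}}
The approach parallels that of Theorem~\ref{thm:linmetalogconc}, now tracking error propagation through two layers. I would classify each of the $m$ problems by which branch of Algorithm~\ref{algo-two} it takes: type (a) if the $\tau$-sparse level-2 combination in step 3 succeeds (cost $\tilde{O}(\tau \log \tr/\epsilon) = \tilde{O}(\log r/\epsilon)$ per problem, treating $\tau$ as constant); type (b) if step 3(b)(a) succeeds, giving a new $\tu_{\tr}$ (cost $\tilde{O}(\tk/\uepsacc) = \tilde{O}(k/\uepsacc)$ once we show $\tk \le k$); type (c) if neither succeeds and we learn from scratch, producing a new $\tw_{\tk}$ and a new $\tu_{\tr}$ (cost $\tilde{O}(n/\wepsacc)$). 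Combining this classification with the claims $\tk \le k$ and $\tr \le \tau r$ immediately yields the total sample complexity $\tilde{O}(nk/\wepsacc + kr/\uepsacc + m\log r/\epsilon)$.

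For $\tk \leq k$, I would repeat verbatim the inductive argument of Theorem~\ref{thm:linmetalogconc}: each type-(c) event produces a $\tw_{\ttk}$ with $\theta(\tw_{\ttk}, a_{i_{\ttk}}) \leq \wepsacc/c$ (by Lemma~\ref{l:angle}), and the failure of step 3(b)(a) forces $a_{i_{\ttk}}$ to have angular distance $\Omega(\uepsacc)$ from $\span\{\tw_1,\ldots,\tw_{\ttk-1}\}$. Combined with Lemma~\ref{lem:subspace}, which gives $\theta(\span\{a_{i_1},\ldots,a_{i_{\ttk-1}}\},\span\{\tw_1,\ldots,\tw_{\ttk-1}\}) \leq 2k\wepsacc/(c\gamma)$, the triangle inequality shows $a_{i_{\ttk}}$ is $\gamma$-far from $\span\{a_{i_1},\ldots,a_{i_{\ttk-1}}\}$; the choice $\wepsacc \leq c\gamma\uepsacc/k$ is exactly what absorbs the cross term. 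Since all $a_i$ lie in a $k$-dimensional subspace, the number of such $\gamma$-separated vectors is at most $k$.

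For $\tr \leq \tau r$ I would associate every type-(b) or (c) round to the true $\tau$-dimensional subspace $V_s$ containing its target $a_i$, and argue that at most $\tau$ metafeatures can be contributed per $V_s$. The rerun of the previous argument one level up is as follows: if $a_{i_{\tilde j}}$ were in the span of previously-chosen targets within $V_s$, then—using Lemma~\ref{lem:subspace} applied inside $V_s$ with per-coordinate error $\uepsacc/c + 2k\wepsacc/(c\gamma)$ (the first from the level-1 fit, the second from the already-controlled level-1 subspace distortion) and angular gap $\tgamma$—the resulting subspace-angle is $O(\tau\uepsacc/\tgamma)$, which by Lemma~\ref{l:angle} and triangle inequality yields a hypothesis of error at most $\epsilon$ in $\tau$-sparse form, contradicting the failure of step 3(a). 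The parameter cascade $\wepsacc \leq c\gamma\uepsacc/k$, $\gamma \leq c\uepsacc$, $\uepsacc \leq c\tgamma\epsilon/\tau$, $\tgamma \leq c\epsilon$ is designed precisely to make each error bound smaller than the gap at the next level. Since $\dim V_s = \tau$, at most $\tau$ $\tgamma$-separated targets can occur per $V_s$, giving $\tr \leq \tau r$.

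The main obstacle is the two-stage error propagation in the third paragraph: the distortion of the reconstructed subspace within $V_s$ depends on both $\uepsacc$ (level-2 fit error) and the inherited level-1 subspace distortion $2k\wepsacc/\gamma$, and these must be kept strictly below $\tgamma$ for Lemma~\ref{lem:subspace} to apply productively. Carefully verifying that the stated parameter constraints suffice for this — in particular, that the compounded error remains $o(\tgamma)$ and the final triangle-inequality bound is $o(\epsilon)$ — is where the bulk of the bookkeeping lies.
\end{proofof}
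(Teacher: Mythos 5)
Your proposal is correct and matches the paper's proof in all essentials: the same classification of problems by which branch of Algorithm~\ref{algo-two} they take, the same per-level separation induction using Lemma~\ref{l:angle} and Lemma~\ref{lem:subspace}, and the same parameter cascade ($\wepsacc \leq c\gamma\uepsacc/k$ absorbing the level-1 distortion, $\uepsacc \leq c\tgamma\epsilon/\smdim$ controlling the level-2 application of the subspace lemma). The only cosmetic differences are that for $\tr \leq \smdim r$ you cap the number of $\tgamma$-separated targets per $\smdim$-dimensional subspace directly by dimension, whereas the paper phrases the same argument as saturation (once $\smdim$ metafeatures accumulate for a subspace $U_s$, $\theta(U_s,\cdot)$ is $O(\smdim\uepsacc/\tgamma) \leq \epsilon$ so all future targets there are learned sparsely), and your extra $2k\wepsacc/(c\gamma)$ term in the second-level per-vector error is unnecessary but harmless, since $\theta(\tu_{\ttr}\tW, a_{j_{\ttr}}) = O(\uepsacc)$ already follows from the level-2 fit (or from $\wepsacc \leq \uepsacc$ in the from-scratch case), exactly as the paper notes.
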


\begin{proof}
We divide problems in two types: problems of  type (a) are those for which we can learn a classifier of desired error at most
 $\epsilon$ by using the previously learnt metafeatures at the second middle level; the rest are of type (b).

For problems of type (a) we achieve  error at most $\epsilon$ by design.
For each problem $i$ of type (b) we have either opened a new row in $\tU$, and we have set $\tw_{\ttr}=\alpha_i$, where $\ttr$ is such that $j_{\ttr}=i$ or we have opened both a new row in $\ttr$ in $\tU$ and a new row $\ttk$ in $\tW$, and set $j_{\ttr}=i$ and $i_{\ttk}=i$. In both cases, by design and Lemma \ref{l:angle} (and the fact that $\wepsacc \leq \uepsacc$) we have $\theta(\tu_{\ttr} \tW,  a_{j_{\ttr}}) = O(\uepsacc)$; furthermore since $\tc_i = e_{\ttr}$ we also have $\theta(\ta_{i}, a_{i}) = O(\uepsacc)$.
Furthermore, for each $\tu_{\ttr}$ we create for a problem $j_{\ttr}$ we have that
  $\tu_{\ttr} \tW$ is $\tgamma$-far from the span of those vectors in $\{ \tu_1,...,\tu_{j_{\ttr-1}} \}$ whose corresponding targets lie in space $U_s$, where
  $U_s$ is one of the  $r$ $\smdim$-dimensional subspaces that $a_{j_{\ttr}}$ belongs to.
(Otherwise if $\tu_{\ttr}$  is $\tgamma$-close we would have been able to learn sparsely to error $\epsilon$ based on the second level metafeatures.)

Using this together with the fact that $\uepsacc = O(\frac{\tgamma\epsilon}{\smdim})$, we obtain (by Lemma~\ref{lem:subspace}) that once we have $\smdim$ second level  meta-features $\tu_{j_{l_1}}, \ldots, \tu_{j_{l_{\smdim}}}$ whose corresponding targets $a_{l_1}, \ldots, a_{l_{\smdim}}$ lie in the same $\smdim$-dimensional space $U_s$, we have
$$\theta(U_s, \span(\tu_{j_{l_1}} \tW, \ldots,\tu_{j_{l_{\smdim}}} \tW)) = O(\smdim \uepsacc/\tgamma) \leq  \epsilon.$$  
Therefore we will be able to learn based on second level metafeatures any future target belonging to that subspace. This implies $\tr \leq \smdim r$.

Using the fact that $\wepsacc \leq c\frac{\gamma\uepsacc}{k}$,
as in the proof of Theorem~\ref{thm:linmetalogconc}, we can prove by induction that for each $\tw_{\ttk}$ we create for a problem $i_{\ttk}$, we have $a_{i_{\ttk}}$ is $\gamma$-far from
$\span\{a_{i_1}, \cdots ,a_{i_{\ttk-1}}\}$   and  $\tw_{\ttk}$ is $\gamma$-far from $\span(\tw_1,...,\tw_{\ttk-1})$; this implies $\tk \leq k$.
\end{proof}
\end{document}